\newcommand{\ARXIV} 
\setlist[itemize]{itemsep=0ex, topsep=3pt}
\setlist[enumerate]{itemsep=0ex, topsep=3pt}
\newtheorem{theorem}{Theorem}
\newtheorem{lemma}{Lemma}
\newtheorem{definition}{Definition}
\newtheorem{example}{Example}
\newtheorem{remark}{Remark}
\algnewcommand\algorithmicinput{\textbf{Input:}}
\algnewcommand\Input{\item[\algorithmicinput]}
\algnewcommand\algorithmicoutput{\textbf{Output:}}
\algnewcommand\Output{\item[\algorithmicoutput]}
\newcommand{\Break}{\State \textbf{break} }
\renewcommand\vec[1]{\bm{#1}}
\newcommand{\term}[1]{\emph{#1}}
\newcommand{\alg}{OASIS\xspace}
\newcommand{\eg}{\emph{e.g.,}\xspace}
\newcommand{\ie}{\emph{i.e.,}\xspace}
\newcommand{\cf}{\emph{cf.}\xspace}
\newcommand{\etal}{\emph{et~al.}\xspace}
\newcommand{\reals}{\ensuremath{\mathbb{R}}\xspace}
\newcommand{\cR}{\ensuremath{\mathcal{R}}\xspace}
\newcommand{\cD}{\ensuremath{\mathcal{D}}\xspace}
\newcommand{\cZ}{\ensuremath{\mathcal{Z}}\xspace}
\newcommand{\TP}{\ensuremath{\mathrm{TP}}\xspace}
\newcommand{\FP}{\ensuremath{\mathrm{FP}}\xspace}
\newcommand{\FN}{\ensuremath{\mathrm{FN}}\xspace}
\newcommand{\oracle}{\ensuremath{\mathtt{Oracle}}\xspace}
\newcommand{\myparagraph}[1]{\vspace{0.5em}\noindent\textbf{#1}\xspace}
\begin{document}


\title{In Search of an Entity Resolution OASIS: \\ Optimal Asymptotic Sequential Importance Sampling}



%
%
%
%

\numberofauthors{1} 

\author{
%
%
\alignauthor
Neil G.\ Marchant and Benjamin I.\ P.\ Rubinstein\\
       \affaddr{School of Computing and Information Systems}\\
       \affaddr{University of Melbourne, Australia}\\
       \email{\string{nmarchant, brubinstein\string}@unimelb.edu.au}
}

\maketitle

\begin{abstract}
Entity resolution (ER) presents unique challenges for evaluation methodology. While crowdsourcing platforms acquire ground truth, sound approaches to sampling must drive labelling efforts. In ER, extreme class imbalance between matching and non-matching records can lead to enormous labelling requirements when seeking statistically consistent estimates for rigorous evaluation. This paper addresses this important challenge with the \alg algorithm: a sampler and F-measure estimator for ER evaluation. \alg draws samples from a (biased) instrumental distribution, chosen to ensure estimators with optimal asymptotic variance. As new labels are collected \alg updates this instrumental distribution via a Bayesian latent variable model of the annotator oracle, to quickly focus on unlabelled items providing more information. We prove that resulting estimates of F-measure, precision, recall converge to the true population values. Thorough comparisons of sampling methods on a variety of ER datasets demonstrate significant labelling reductions of up to 83\% without loss to estimate accuracy.
\end{abstract}

\section{Introduction}

The very circumstances that give rise to entity resolution (ER) systems---lack of shared keys
between data sources, noisy\slash missing features, heterogeneous distributions---explain the
critical role of evaluation in the ER pipeline~\cite{christen2007quality}. Production systems
rarely achieve near-perfect
precision and recall due to these many inherent ambiguities, and when they do, even minute
increases to error rates can lead to poor user experience~\cite{negahban2012scaling},
lost business~\cite{verykios2000accuracy}, or erroneous diagnoses and public health
planning~\cite{harron2014evaluating}. 
It is thus vital that ER systems are evaluated in a statistically sound manner so as to 
capture the true accuracy of entity resolution.
This paper addresses this challenge with the development of an algorithm based on adaptive
importance sampling, which we call `\alg'.

While crowdsourcing platforms provide inexpensive provisioning of 
annotations, sampling items for labelling must proceed carefully. A key challenge in ER is the
inherent imbalance between matching and non-matching records which
can be as high as $1:n$ when matching two sources of $n$ records (\eg reaching the millions). 
Researchers leverage several existing practices to evaluate such an ER system:
(i) Label samples drawn from all
candidate matches uniformly
at random (\eg record pairs in two-source integration): while yielding unbiased estimates, this can take
thousands of samples before finding one match-labelled sample, and many tens of thousands
of labels before estimates converge. (ii) Balance inefficient passive sampling with
cheap crowdsourcing resources: while crowdsourcing facilitates ER evaluation, large
nonstationary datasets require constant refresh and can quickly drive costs back up. (iii) Exploit blocking schemes or search facilities to 
reduce non-match numbers: such filtering injects hidden bias into estimates. 


By contrast, \alg offers a principled alternative to evaluating F-measure, precision,
recall---robust measures under imbalance---given an ER system's set of output similarity
scores. \alg forms an instrumental distribution 
from which it samples record pairs non-uniformly, minimising the estimator's asymptotic variance.
This instrumental distribution is based on estimates of latent truth due to a simple Bayesian model, and is updated iteratively. 
By stratifying the pool of record pairs by similarity score, \alg transfers
performance estimates and samples fewer points. By ensuring our sampler may (with non-zero
probability) sample any stratum, we manage the explore-exploit trade-off, admitting
guarantees of statistical consistency: our estimates of F-measure, precision,
recall converge to the true population parameters with high probability.


The unique characteristics of \alg together yield a rigorous approach to ER evaluation
that can use orders-of-magnitude fewer labels. This is borne out in thorough comparisons
of baselines across six datasets of varying sizes and class imbalance (up to over 1:3000).




\myparagraph{Contributions.} 
1) The novel \alg algorithm for efficient evaluation of ER based on adaptive importance sampling. This algorithm has been released as an open-source Python package at 
https://git.io/OASIS ; \\[0.25em]
2) Theoretical guarantee that \alg yields statistically consistent estimates, made challenging by the non-independence of the samples and the non-linearity of the F-measure; and \\[0.25em]
3) A comprehensive experimental comparison of \alg with existing state-of-the-art algorithms demonstrating superior performance \eg 83\% reduction in labelling requirements under a class imbalance of 1:3000.

\section{Background}

Motivated by the challenges of accurate but efficient evaluation of 
ER, we begin by reviewing the key features of ER.

\subsection{Entity resolution}
\label{sec:ER-system}

\begin{definition}[ER problem]
Let $\cD_1$ and $\cD_2$ denote two databases, each containing
a finite number of records $n_1, n_2$ representing underlying entities; and let
fixed, unknown relation $\cR \subseteq \cD_1 \times \cD_2$ describe
the matching records across the databases, \ie pairs of records representing
the same entity. The \term{entity resolution problem} is to approximate $\cR$ with
a predicted relation $\hat{\cR} \subseteq \cD_1 \times \cD_2$.
\end{definition}

\begin{remark}
For simplicity we focus on two-source ER, however
our algorithms and theoretical results apply equally well to multi-source
ER on relations over larger product \linebreak[4] spaces, and deduplicating a single source. 
\end{remark}

An abundant literature describes the typical ER pipeline:
\emph{preparation} amortising record canonicalisation; \emph{blocking} for reducing
pair comparisons through a linear database scan; \emph{scoring}, the most
expensive stage, in which pair attributes are compared and summarised
in similarity scores; and \linebreak[4] \emph{matching} where sufficiently high-scoring pairs
are used to construct $\hat{\cR}$. Further normalisation pre- or post-linkage
such as schema matching or record merging, while non-core, are important
also. We refer the interested reader to review
articles~\cite{winkler_overview_2006,christen_data_2012,getoor_entity_2013}
and the references therein. 

\subsubsection{Similarity scores}
ER is often cast as a binary classification problem on the set of record pairs
$\cZ = \cD_1 \times \cD_2$. A pair $z \in \cZ$
has true Boolean label 1 if a ``match'', that is $z\in\cR$, and label 0 if a
``non-match'', that is $z\notin\cR$. In this work, we leverage the
similarity scores produced in typical ER pipelines: 

\begin{definition}
A \term{similarity score} $s(z)\in\reals$ quantifies the level of similarity
that a given pair $z\in\cZ$ exhibits, \ie the predicted confidence of a match.
\end{definition}

Similarity scores originate from a variety of sources. The scoring 
phase of typical ER pipelines combine attribute-level dis/similarity measures
\eg edit distance, Jaccard distance, absolute deviation, etc., into similarity scores.
The combination itself is often produced by hand-coded rules or supervised classification, fit to a 
training set of known non/matches. 
Unlike in evaluation, data used for training need not be representative: heuristically-compiled training sets may be used when learning discriminative models.
Any \emph{confidence-based classifier}, \eg the 
support vector machine, or  
\emph{probabilistic classifier}, \eg logistic regression or probability trees, produces legitimate similarity scores. Scores from probabilistic classifiers may or may not be \term{calibrated}:

\begin{definition}\label{def:calibration}
	A scoring function $s(\cdot)$ is \term{calibrated} if, of all the record pairs mapping
	to $s(z)=\rho\in[0,1]$, approximately
$100\times\rho$ percent are truly matching. For example, 60\% of pairs with a score of 0.6 should 
be matches.
\end{definition}




\subsection{Evaluation measures for ER}
\label{sec:performance-measures}

All ER evaluation methods produce statistics that summarise the types of errors made in approximating \cR with $\hat{\cR}$. Arguably the most popular among these statistics is the pairwise F-measure which we focus on in this work. The F-measure is particularly well suited to ER, unlike accuracy for example, as its invariance to true negatives makes it more robust to class imbalance. 
The F-measure is a weighted harmonic mean of precision and recall; and
in terms of Type I and Type II errors, the statistic on $T$ labels is
\begin{eqnarray}
	F_{\alpha,T} &=& \frac{\TP}{\alpha(\TP + \FP) + (1 - \alpha)(\TP + \FN)}\enspace,
	\label{eqn:F-measure-finite}
\end{eqnarray}
where $\alpha \in [0, 1]$ is a weight parameter; TP, FP, FN are true
positive, false positive, false negative counts respectively.
\begin{equation*}
	\TP = \sum_{t=1}^T \ell_t \hat{\ell}_t\enspace,\enspace
	\FP = \sum_{t=1}^T (1-\ell_t) \hat{\ell}_t \enspace,\enspace
	\FN = \sum_{t=1}^T \ell_t (1-\hat{\ell}_t) \enspace,
\end{equation*}
where $z_1,\ldots,z_T\sim p$ are query pairs sampled i.i.d from some underlying distribution
$p$ of interest on \cZ such as the uniform distribution; the $\ell_t$ denote ground truth labels recording
(possibly noisy) membership of $z_t$ within \cR; and $\hat{\ell}_t$ indicates $z_t\in\hat{\cR}$. 
When $\alpha = 1$, $F_{\alpha,T}$ reduces to precision, $\alpha = 0$ produces recall, and $\alpha = 1/2$ yields the balanced F-measure, with equal importance on precision and recall.\footnote{The relationship to the $\beta$-parametrisation is $\alpha = 1/(1 + \beta^2)$.}

Our goal will be to estimate the asymptotic limit of $F_{\alpha,T}$ as label budget $T\to\infty$.
For finite pools \cZ this corresponds to labelling of all record pairs with sufficient repetition
to account for (any) noise in the ground truth labels $\ell_t$.


\begin{remark}
The pairwise F-measure is termed ``pairwise" to highlight the application
of the measure to record pairs. Pairwise measures work well when there are
only a few records across the databases which correspond to a particular
entity. In such cases one should not use accuracy, due to significant class
imbalance (\cf Section~\ref{sec:problem-formulation}). For cases where most entities
have many matching records, one may leverage transitivity constraints while
looking to cluster-based measures for
evaluation~\cite{menestrina_evaluating_2010}. 
See \cite{barnes_practioners_2015} for a summary on evaluation.
\end{remark}

\section{Problem formulation}
\label{sec:problem-formulation}
Suppose we are faced with the task of evaluating an ER system as described in the previous section. 
Given that we do not know \cR, how can we efficiently leverage labelling resources to estimate the pairwise
F-measure?

\begin{definition}[Efficient evaluation problem]
\label{def:eval-problem}
~\\Consider evaluating a predicted ER $\hat{\cR}\subseteq\cZ=\cD_1\times \cD_2$, equivalently represented by predicted labels $\hat{\ell}(z)=\mathbf{1}\big[z\in\hat{\cR}\big]$ for $z\in\cZ$. We are given access to:
\begin{itemize}
	\item a pool\footnote{We introduce the pool $P$ for flexibility. It can be taken to be the entire $\cZ$, or a proper subset for efficiency.} $P\subseteq\cZ$ of record pairs, \eg $P=\cZ$;
	\item a similarity scoring function $s:\cZ\to\reals$; and 
	\item a randomised labelling $\oracle: \mathcal{Z} \to \{0,1\}$, which returns labels $\ell(z) \sim \oracle(z)$ indicating membership in \cR. The oracle's response distribution is parametrised by \term{oracle probabilities} $p(1|z) = \Pr[\oracle(z) = 1]$.
\end{itemize}

With this setup, the \term{efficient evaluation problem} is to devise an estimation procedure for $F_\alpha$, which samples record pairs $z_1,\ldots,z_T\in P$ and makes use of the corresponding labels provided by the oracle. We adopt integer index notation on $\hat{\ell}, \ell$ and $s$ to denote their values at the $t$-th query; \eg $\hat{\ell}_t = \hat{\ell}(z_t)$ for query $z_t$. 

Solutions should produce estimates $\hat{F}_{\alpha,T}$ exhibiting:
\begin{enumerate}[(i)]
	\item \textbf{consistency}: convergence in probability to the true value $F_\alpha$ on pool $P$ with respect to underlying distribution $p$
		\begin{equation}
			F_\alpha = \lim_{T\to\infty} F_{\alpha, T}\enspace;\enspace\mbox{and}\label{eqn:F-measure}
			\vspace{-1ex}
		\end{equation}
	\item \textbf{minimal variance}: vary minimally about $F_\alpha$.
\end{enumerate}
\end{definition}

In other words, solutions should produce precise estimates whilst minimising queries to the oracle, since it is assumed that queries come at a high cost. Computational efficiency of the estimation procedure is not a direct concern, so long as the response time of the oracle dominates (typically of order seconds in a crowdsourced setting).

ER poses unique challenges for efficient evaluation.




\myparagraph{Challenge: Extreme class imbalance.}
The inherent class imbalance in ER presents a challenge for estimation
of F-measure. For deduped databases $\cD_1,\cD_2$, the minimum possible class
imbalance occurs when both DBs contain $n$ records and there is a matching record in $\cD_1$ for every record in $\cD_2$. In this case, the \term{class imbalance ratio} (ratio of non-matches to matches) is
$n - 1$.
This is problematic for passive (uniform i.i.d.) sampling even for modest-sized databases, since $\mathcal{O}(n)$ expected pairs would be
sampled for every match found. As $F_\alpha$ depends only on matches (both
predicted and true), many queries to the oracle would be wasted on labels
that don't contribute. The problem becomes one of searching for an oasis within
a desert when $n \sim 10^6$ or more.

\myparagraph{Approach: Biased sampling.}
\label{sec:biased-sampling}
One response to class imbalance is \term{biased sampling}, that is, sampling from a population 
(or space more generally) in a way that systematically differs from the underlying distribution~\cite[Chapter~5]{rubinstein_simulation_2007}. Biased sampling methods have found broad application in areas as diverse as survey methodology,
Monte Carlo simulations, and active learning, to name a few. They work by leveraging known information about the system---here the similarity scores and
the pool of record pairs---to obtain more precise estimates using fewer samples. One of the most effective biased sampling methods is \term{importance sampling (IS)}, which we illustrate below:

\begin{example}
Consider a random variable $X$ with probability density $p(x)$ and consider the
estimation of parameter $\theta = \operatorname{E}[f(X)]$. The standard (passive)
approach draws an i.i.d. sample from $p$ and uses the Monte Carlo estimator
$\hat{\theta} = \frac{1}{T} \sum_{i = 1}^{T}f(x_i)$. Importance sampling, by contrast,
draws from an \term{instrumental distribution} denoted by $q$. Even though the sample
from $q$ is biased (i.e.\ not drawn from $p$), an unbiased estimate of $\theta$ can be obtained by using the bias-corrected
estimator $\hat{\theta}^{\mathrm{IS}} = \frac{1}{T} \sum_{i = 1}^{T} \frac{p(x_i)}{q(x_i)} f(x_i)$.
\end{example}

An important consideration when conducting IS is the choice of instrumental distribution, $q$. If $q$ is poorly selected, the resulting estimator may perform worse than passive sampling. If on the other hand, $q$ is selected judiciously, so that it concentrates on the ``important'' values of $X$, significant efficiency dividends will follow. 

\section{A New Algorithm: \alg}
This section develops our new algorithm for evaluating ER---\term{Optimal Asymptotic Sequential
Importance Sampling (\alg)}. In designing an adaptive/sequential importance sampler (AIS),
we proceed in two stages: (i) choosing an appropriate instrumental distribution to optimise asymptotic variance of the estimator,
see Section~\ref{sec:alg-selecting-inst-dist}; and (ii) deriving an appropriate update rule
and initialisation process for the instrumental distribution, now restricted to score strata, see Sections~\ref{sec:alg-class-probs} and~\ref{sec:alg-initialisation}.
Section~\ref{sec:alg-finish} brings all of the components of \alg together, presenting
the algorithm in its entirety. Section~\ref{sec:consistency-analysis} presents a thorough
theoretical analysis of \alg.

\subsection{Selecting the instrumental distribution}
\label{sec:alg-selecting-inst-dist}
We begin by defining an estimator for the F-measure which corrects for the bias of AIS.
It is based on the standard estimator of Eqn.~\eqref{eqn:F-measure-finite}, with the addition of importance weights.
\begin{definition}
	Let $\{x_t = (z_t, \ell_t)\}_{t = 1}^T$ be a sequence of record pairs and labels, where the
	$t$-th record pair in the sequence is drawn from pool $P$ according to an instrumental distribution
	$q_t$, which may depend on the previously sampled items $\vec{x}_{1:t-1} = \{x_1, \ldots, x_{t-1}\}$
	and labels $\ell_t\sim\oracle(z_t)$. Then the \emph{AIS estimator for the F-measure} is given by
	\begin{equation}
	\hat{F}_{\alpha}^{\mathrm{AIS}} = \frac{\sum_{t = 1}^{T} w_t \ell_t \hat{\ell}_t}{\alpha \sum_{t = 1}^{T} w_t \hat{\ell}_t + (1-\alpha) \sum_{t = 1}^{T} w_t \ell_t}\enspace,
	\label{eqn:AIS-F-estimator}
	\end{equation}
	where $w_t = p(z_t)/q_t(z_t)$ is the importance weight associated with the $t$-th item, and $p$ denotes any underlying distribution on the record pairs from which the target
	$F_\alpha$ is defined.
\end{definition}

This definition assumes that the record pairs are drawn from an, as yet, unspecified sequence of
instrumental distributions $\{q_t\}_{t=1}^T$. It is important that these instrumental distributions
are selected carefully, so as to maximise the sampling efficiency. Later, we justify the choice of $\hat{F}_\alpha^{\mathrm{AIS}}$ by proving that it is consistent for $F_\alpha$ (\cf Theorem~\ref{thm:consistency-F-alg}).

\begin{remark}
\label{rem:}
In ER we take: $P\subseteq\cZ$ typically a DB product space $\cD_1\times\cD_2$ which is finite (but possibly massive);
and the $p$ through which $F_{\alpha,T}$ is most naturally defined is the uniform distribution on $P$ \ie 
placing uniform mass $1/N$ where $N=|P|$.
However \alg and its analysis actually hold more generally: pools $P$ of \emph{instances} that could be
uncountably infinite in size; and arbitrary marginal distributions $p$ on $P$.
\end{remark}


\subsubsection{Variance minimisation}
A common approach for instrumental distribution design is based on the principle of variance
minimisation~\cite{rubinstein_simulation_2007}. In the ideal case, a single
instrumental distribution (for all $t$) is selected that minimises the variance of the estimator:
\begin{equation}
q^\star \in \arg \underset{q}{\min} \ \mathrm{Var}(\hat{F}_\alpha^{\mathrm{AIS}}[q])\enspace.
\label{eqn:IS-variance-minimisation}
\end{equation}

This optimisation problem is difficult to solve analytically, in part due to the intractability
of the variance term. However, by replacing variance with the \emph{asymptotic} variance (taking $T\to\infty$),
a solution is obtained as
\begin{equation}
\begin{split}
q^\star(z) & \propto p(z) \left[ (1- \alpha) (1 -  \hat{\ell}(z)) F_{\alpha} \sqrt{p(1|z)} \right. \\
& \left. \;\;\;\; + \hat{\ell}(z) \sqrt{\alpha^2 F_\alpha^2 (1 - p(1|z)) + (1-F_\alpha)^2 p(1|z)} \right], \label{eqn:optimal-inst-dist}
\end{split}
\end{equation}
where $p(z)$ is the underlying distribution on $P$ (see Remark~\ref{rem:}) and $p(1|z)$ is the oracle probability (see Definition~\ref{def:eval-problem}).  
The proof of this result is given in \cite{sawade_active_2010}. We call $q^\star(z)$ the \term{asymptotically optimal
instrumental distribution}, owing to its relationship with asymptotic minimal variance.

\subsubsection{Motivation for adaptive sampling}
Close examination of~\eqref{eqn:optimal-inst-dist} reveals that the asymptotically optimal
instrumental distribution depends on the true F-measure $F_\alpha$ and true oracle probabilities
$p(1|z)$, both of which are unknown a priori. This implies
that an adaptive procedure is well-suited to this problem: we estimate $q^\star$ at iteration $t$
using \emph{estimates} of $F_\alpha$ and $p(1|z)$, which themselves are based on the previously
sampled record pairs and labels $\vec{x}_{1:t-1}$. As the sampling progresses and labels are
collected, the estimates of $F_\alpha$ and $p(1|z)$ should approach their true values, and
$q_t^\star$ should in turn approach $q^\star$.

In order to implement this adaptive procedure, we must devise a way of iteratively estimating
$F_\alpha$ and $p(1|z)$. There is a natural approach for $F_\alpha$: we simply use
$\hat{F}_{\alpha}^{\mathrm{AIS}}$ at the current iteration. However, the oracle probabilities
present more of a difficulty. We outline one approach in Section~\ref{sec:alg-class-probs}.

\subsubsection{Exploration vs. exploitation}
In the subsequent analysis of \alg (\cf Section~\ref{sec:consistency-analysis}), we show that the
asymptotically optimal instrumental distribution given in Eqn.~\eqref{eqn:optimal-inst-dist}
does not guarantee consistency (convergence in probability). This is because it permits zero weight to be placed on some items, meaning that parts of the pool may never be explored. Consequently, we propose to replace $q^\star$ by an $\varepsilon$-greedy distribution
\begin{equation}
q(z) = \varepsilon \cdot p(z) + (1 - \varepsilon) \cdot q^\star(z)\enspace,
\label{eqn:epsilon-greedy-inst-dist}
\end{equation}
where $0 < \varepsilon \leq 1$. For $\varepsilon$ close to 0, the sampling approaches optimality (it exploits), whereas for $\varepsilon$ close to 1, the sampling approaches passivity (it explores). This bears resemblance to explore-exploit trade-offs commonly encountered in online decision making (\eg multi-armed bandits)~\cite{cesa-bianchi_prediction_2006}.

\subsection{Estimating the oracle probabilities}
\label{sec:alg-class-probs}
In this section, we propose an iterative method for estimating the oracle probabilities, which
are required for the estimation of $q^\star$. Our proposed method brings together two key concepts:
stratification and a Bayesian generative model of the label distribution.

\subsubsection{Stratification}
Stratification is a commonly used technique in statistics that involves dividing a population into homogeneous subgroups (called strata)~\cite{cochran_sampling_1977}. Often the process of creating the strata is achieved by \emph{binning} according to a variable, or \emph{partitioning} according to a set of rules. Our use of stratification is somewhat atypical, in that we are not using it to estimate a population parameter, but rather as a \emph{parameter reduction} technique. Specifically, we aim to map the set of oracle probabilities $\{p(1|z): z\in P\}$ (of size $N=|P|$ in ER) to a smaller set of parameters of size $K$, essentially one per stratum.

\myparagraph{Parameter reduction.}
Consider a partitioning of record pair pool $P$ into $K$ disjoint strata $\{P_1, \ldots, P_K\}$, such that the pairs in a stratum share approximately the same values of $p(1|z)$.\footnote{This is the meaning of ``homogeneity'' which we adopt.} If this ideal condition is satisfied, then our work in estimating the set of probabilities $\{p(1|z):z \in P\}$ is significantly reduced, because information gained about a particular pair $z \in P_k$ is immediately transferable to the other pairs in $P_k$. As a result, we can effectively replace the set of probabilities $\{p(1|z) : z \in P_k \}$ for the record pairs in $P_k$, by a single probability $p(1|P_k)$.

\myparagraph{Relaxing the homogeneity condition.}
In reality, we don't know which record pairs in $P$ (if any) have roughly the same values of $p(1|z)$. Fortunately, it turns out that this condition does not need to be satisfied too strictly in order to be useful. Previous work~\cite{bennett_online_2010,druck_toward_2011} has demonstrated that the homogeneity condition can be satisfied in an approximate sense by using similarity scores as a proxy for true oracle probabilities. In other words, \emph{we regard a stratum to be approximately homogeneous if the pairs it contains have roughly the same similarity scores}. The more this proxy holds true, the more efficient \alg becomes in practice; however critically, our guarantees hold true regardless.

\myparagraph{Stratification method.}
In order to stratify the record pairs in $P$ according to their similarity scores, we shall use the \term{cumulative $\sqrt{F}$ (CSF) method}, originally proposed in~\cite{dalenius_minimum_1959} and previously used in the present context in~\cite{druck_toward_2011}. The CSF method has a strong theoretical grounding, in that it aims to achieve minimal intra-stratum variance in the scores. 

\begin{algorithm}[t]
	\caption{Cumulative $\sqrt{F}$ (CSF) stratification \protect\cite{dalenius_minimum_1959}}
	\label{alg:stratification-csf}
	\begin{algorithmic}[1]
		\Input \begin{tabular}[t]{cl}
			$P$ & pool of record pairs \\
			$s$ & similarity score function $: P\to\reals$  \\
			$\tilde{K}$ & desired number of strata \\
			$M$ & number of bins (for estimating score dist.)
		\end{tabular}
		\Output strata $P_1,\ldots,P_K$ (not guaranteed $K=\tilde{K}$)
		\vspace{1ex}
		\State Pool scores: $S \gets \{ s(z) | z\in P \}$
		\State \parbox[t]{\dimexpr\linewidth-\algorithmicindent}{Distribution of scores ($F$) using $M$ bins:\\ $\texttt{counts}, \texttt{score\_bins} \gets \operatorname{histogram}(S, \mathrm{bins} = M)$}
		\State Cum.\ dist.\ of $\sqrt{F}$: $\texttt{csf} \gets \big[\sum_{i = 1}^{m} \sqrt{\texttt{counts}[i]}\,\big]_{m = 1:M}$
		\State Bin width on cum. $\sqrt{F}$ scale: $w \gets \texttt{csf}[M]/\tilde{K}$
		\For {$k \in \{1, \ldots, \tilde{K} + 1\}$}
		\State Bins on cum. $\sqrt{F}$ scale: $\texttt{csf\_bins}[k] \gets (k - 1) w$
		\EndFor
		\State $K \gets 1$
		\For {$j \in \{1, \ldots, M\}$}
		\If {$K = \tilde{K}$ or $j = M$}
		\State Append $\texttt{score\_bins}[\tilde{K}]$ to \texttt{new\_bins}
		\Break
		\EndIf
		\If {$\texttt{csf}[j] \geq \texttt{csf\_bin}[K]$}
		\State Append $\texttt{score\_bins}[j]$ to \texttt{new\_bins}
		\State $K \gets K + 1$
		\EndIf
		\EndFor
		\State Allocate record pairs $P$ to strata $P_1,\ldots,P_K$ based on \texttt{new\_bins} (remove any empty strata, updating $K$)
		\State \Return $P_1,\ldots,P_K$
	\end{algorithmic}
\end{algorithm}

For completeness, we have included an implementation of the method in Algorithm~\ref{alg:stratification-csf}. It proceeds by constructing an empirical estimate of the cumulative square root of the distribution of scores (lines 2--3). Then the strata are defined as equal-width bins on the CSF scale (lines 4--7). Finally, the bins are mapped from the CSF scale to the score scale (lines 8--18), so that the scores (record pairs) may be binned in the usual way (line 19). We note that any stratification method could be used in place of the CSF method (\cf \eg the \term{equal size method} described in~\cite{druck_toward_2011}).

\myparagraph{Selecting the number of strata.}
The number of strata $K$ represents a trade-off: For large $K$, estimates of the oracle probabilities enjoy finer granularity and can better approach their true values; however large $K$ leads to more parameters and hence more labels required for convergence of estimates.

\begin{figure}
	\centering
	\includegraphics[width=\linewidth]{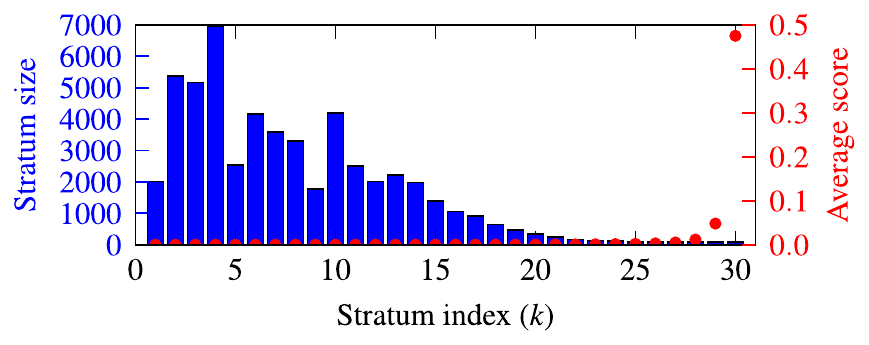}
	\caption{Size and mean score of the CSF strata for the Abt-Buy pool, using calibrated (probabilistic) scores.}
	\label{fig:csf-strata-example}
\end{figure}

In practice for ER evaluation, we find that the there is often a ``natural'' range of $K$ for the CSF method. The example in Figure~\ref{fig:csf-strata-example} shows that we typically construct very large strata with low similarity scores, and very small strata with high similarity scores: a form of heavy-tailed distribution due to the extreme class imbalance. If $K$ is set too large, then we immediately discover the strata corresponding to the higher similarity scores become too small (they may contain only 1 or 2 record pairs). We find a range of $K$ from roughly 30--60 to work well for most datasets considered in Section~\ref{sec:expt}.

\subsubsection{A Bayesian generative model}
Having partitioned the record pairs in $P$ into $K$ strata $\{P_1, \ldots, P_K\}$, our goal is to estimate $p(1|P_k)$ (for all $k$) using the collected \oracle labels. For notational convenience, we denote the true value of $p(1|P_k)$ by $\pi_k$ and a corresponding estimate by $\hat{\pi}_k$. We shall adopt a generative model for observed labels which regards $\pi_k$ as a latent variable.

\myparagraph{Model of a stratum.} Consider a label $\ell$ received from the oracle for a record pair in $P_k$. We assume that the label is generated from a Bernoulli distribution with probability $\pi_k$ of being a match (binary label `1'), \ie
\begin{equation}
	\ell \sim \mathrm{Bernoulli}(\pi_k)\enspace.
\end{equation}
Since the Bernoulli distribution is conjugate to the beta distribution, we adopt a beta prior for $\pi_k$:
\begin{equation}
\pi_{k} \sim \mathrm{Beta}(\gamma_{0,k}^{(0)}, \gamma_{1,k}^{(0)})\enspace,
\end{equation}
where $\gamma_{0,k}^{(0)}$ and $\gamma_{1,k}^{(0)}$ are the prior hyperparameters. We describe how to choose the prior hyperparameters in Sections~\ref{sec:alg-initialisation} and \ref{sec:alg-finish}.

\myparagraph{Joint model of strata.} To model each stratum independently but not identically---we do not transfer information across strata but grant each a prior---we factor the joint prior distribution as a product of the marginal $K$ priors. We collect the $\pi_k$'s into a vector $\vec{\pi} = [\pi_1, \pi_2, \ldots, \pi_K]$ and the prior hyperparameters into a $2\times K$ matrix:
\begin{equation}
\vec{\Gamma}^{(0)} = \begin{bmatrix}
\gamma_{0,1}^{(0)} & \gamma_{0,2}^{(0)} & \cdots & \gamma_{0,K}^{(0)} \\
\gamma_{1,1}^{(0)} & \gamma_{1,2}^{(0)} & \cdots & \gamma_{1,K}^{(0)}
\end{bmatrix}\enspace.
\end{equation}
The posterior distribution of $\vec{\pi}$, given the labels received from the oracle up to iteration $t$, is a product of the $K$ corresponding independent beta posterior distributions. Continuing with the previous notation, we store the posterior hyperparameters at iteration $t$ in a matrix $\vec{\Gamma}^{{(t)}}$.

\myparagraph{Iterative posterior updates.} To obtain a new estimate of $\vec{\pi}$ per iteration, we iteratively update the posterior hyperparameters upon arrival of \oracle label observations $\ell_t$. Suppose label $\ell_t$ is observed as a result of querying with a record pair from stratum $P_{k^\star}$. Then the update involves:
\begin{equation}
\begin{split}
\text{copy old values}&: \quad \vec{\Gamma}^{(t)} \gets \vec{\Gamma}^{(t-1)}\\
\text{if } \ell_t = 1&: \quad \gamma_{0,k^\star}^{(t)} \mathrel{{+}{=}} 1 \\
\text{if } \ell_t = 0&: \quad \gamma_{1,k^\star}^{(t)} \mathrel{{+}{=}} 1
\end{split}
\label{eqn:hyperparameter-update-rule}
\end{equation}

A point estimate of $\vec{\pi}$ can be obtained at iteration $t$ via the posterior mean
\begin{equation}
\hat{\vec{\pi}}^{(t)} = \operatorname{E}[\vec{\pi}|\ell_1,\ldots,\ell_t] = \frac{\vec{\Gamma}_{0,:}^{(t)}}{\vec{\Gamma}_{0,:}^{(t)} + \vec{\Gamma}_{1,:}^{(t)}}\enspace.
\end{equation}
Here the notation $\vec{\Gamma}_{i,:}^{(t)}$ represents the $i$-th row of matrix $\vec{\Gamma}^{(t)}$, and the division is carried out element-wise.

\begin{remark}
As a practical modification to speed up convergence of $\hat{\vec{\pi}}$, we can decrease our
reliance on the prior as labels are received. For each column $\vec{\Gamma}_{:,k}^{(0)}$ we can
retroactively multiply by a factor $1/n_k$ where $n_k$ is the number of labels sampled from $P_k$
thus far. Anecdotally we also observe that this improves robustness to misspecified priors.
\end{remark}

\subsubsection{Stratified instrumental distribution}
Since the estimation method for the oracle probabilities produces estimates over the strata, rather
than for individual pairs in the pool, it is appropriate to estimate the instrumental distribution
in the same way. Akin to the mapping from $p(1|z)$ to $\vec{\pi}$, we therefore propose to map $q(z)$ to a vector $\vec{v} = [v_1, \ldots, v_K]$ based on our Bayesian stratified model estimates instead of (unknowable) population parameters. Adapting Eqn.~\eqref{eqn:optimal-inst-dist}, the stratified asymptotically optimal instrumental distribution $\vec{v}^\star$ is defined at iteration $t$ as
\begin{equation*}
\begin{split}
v_{k}^{\star (t)} &\propto \omega_k \left[ (1- \alpha) (1 -  \lambda_k) \hat{F}_{\alpha}^{(t-1)} \sqrt{\hat{\pi}_k^{(t-1)}} \right. \\
& \left. {} + \lambda_k \sqrt{(\alpha \hat{F}_\alpha^{(t-1)})^2 (1 - \hat{\pi}_k^{(t-1)}) + (1 - \hat{F}_\alpha^{(t-1)})^2 \hat{\pi}_k^{(t-1)}} \right],
\end{split}
\end{equation*}
where $\omega_k = |P_k|/N$ is the \term{weight} associated with $P_k$ and $\lambda_k = \frac{1}{\left| P_k \right|}\sum_{i \in P_k} \hat{\ell}_{i}$ is the mean prediction in $P_k$. It follows that the $\varepsilon$-greedy distribution at iteration $t$ is given by
\begin{equation}
v_{k}^{(t)} = \varepsilon \cdot \omega_k + (1 - \varepsilon) \cdot v_{k}^{\star (t)}\enspace.
\label{eqn:alg-epsilon-greedy-inst-dist}
\end{equation}

Having adopted a stratified representation for the instrumental distribution, sampling a record pair is now a two-step process. First a stratum index is drawn from $\{1,\ldots, K\}$ according to $\vec{v}$. Then a record pair is drawn uniformly at random from the resulting stratum.

\subsection{Initialisation}
\label{sec:alg-initialisation}
\alg requires a set of prior hyperparameters $\vec{\Gamma}^{(0)}$ and a guess for the F-measure $\hat{F}_\alpha^{(0)}$ for initialisation purposes. We elect to set these quantities based on the information contained within the similarity scores. Our approach depends centrally on a guess for the oracle probabilities $\hat{\vec{\pi}}^{(0)}$, in that once $\hat{\vec{\pi}}^{(0)}$ is available, the values of $\vec{\Gamma}^{(0)}$ and $\hat{F}_\alpha^{(0)}$ immediately follow. The details of the initialisation are contained in Algorithm~\ref{alg:initialisation}, with further explanation given below.

\myparagraph{Oracle probabilities (lines 2--5).} 
A reasonable guess for $\vec{\pi}$ can be obtained by taking the mean of the similarity scores in each stratum. If the scores are not probabilities, they should be mapped to the $[0,1]$ interval. This can be achieved by applying the logistic function.

\myparagraph{F-measure (lines 6 \& 8).}
The calculation of $\hat{F}_\alpha^{(0)}$ depends on the guess for $\vec{\pi}$ described above and the mean prediction per stratum $\vec{\lambda}$. Breaking down the calculation term-by-term, one begins by estimating the probability of finding a true positive in $P_k$ as $\hat{\pi}_k^{(0)} \lambda_k$, so that the total number of true positives may be approximated by $\sum_{k = 1}^{K} |P_k| \hat{\pi}_k^{(0)} \lambda_k$. Similarly, the total number of actual positives (TP + FN) may be approximated by $\sum_{k = 1}^{K} |P_k| \hat{\pi}_k^{(0)}$. The total number of predicted positives (TP + FP) is known exactly and can be written in terms of $\vec{\lambda}$ as $\sum_{k = 1}^{K} |P_k| \lambda_k$. Using these estimates in Eqn.~\eqref{eqn:F-measure} yields the guess for $\hat{F}_\alpha^{(0)}$ in line 8.

\myparagraph{Prior hyperparameters.}
We also set $\vec{\Gamma}^{(0)}$ based on $\hat{\vec{\pi}}^{(0)}$ 
\begin{equation*}
\vec{\Gamma}^{(0)} = \eta \begin{bmatrix}
\hat{\vec{\pi}}^{(0)} \\
1 - \hat{\vec{\pi}}^{(0)}
\end{bmatrix}\enspace.
\end{equation*}
Here $\eta > 0$ is an adjustable parameter that controls the strength of the prior. For ease of presentation, this step is included in Algorithm~\ref{alg:AIS} (line 1) rather than Algorithm~\ref{alg:initialisation}.

\begin{algorithm}[t]
	\caption{Initialisation of Bayesian model}
	\label{alg:initialisation}
	\begin{algorithmic}[1]
		\Input \begin{tabular}[t]{cl}
			$0 \leq \alpha \leq 1$ & F-measure weight \\
			$P$ & pool of record pairs \\
			$\hat{\cR}$ & predicted ER \\
			$s$ & similarity score function $: P\to\reals$ \\
			$\tau$ & \reals-valued score threshold (optional) \\
		$\{P_k\}_{k=1}^K$& stratum allocations
		\end{tabular}
		\Output \begin{tabular}[t]{cl}
			$\hat{F}_{\alpha}^{(0)}$ & initial F-measure \\
			$\hat{\vec{\pi}}^{(0)}$ & prior hyperparameters
		\end{tabular}
		\vspace{1ex}
		\For {$k \in \{1,\ldots, K\}$}
		\State Mean score per stratum: $\hat{\pi}_{k}^{(0)} \gets \frac{1}{\left| P_k \right|}\sum_{z \in P_k} s(z)$
		\If {scores are not probabilities in $[0,1]$}
		\State Transform: $\hat{\pi}_{k}^{(0)} \gets \mathrm{logit}(\hat{\pi}_{k}^{(0)} - \tau)$
		\EndIf
		\State Mean pred.\ per stratum: $\mathbf{\lambda}_k \gets \frac{1}{\left| P_k \right|}\sum_{z \in P_k} \hat{\ell}_{z}$
		\EndFor
		\State $\hat{F}_{\alpha}^{(0)} \gets \frac{\sum_{k = 1}^{K} \left| P_k \right| \pi_{0,k}\lambda_k}{\alpha \sum_{k = 1}^{K} \left| P_k \right| \lambda_k + (1 - \alpha) \sum_{k = 1}^{K} \left| P_k \right| \pi_{0,k} }$
		\State \Return $\hat{F}_{\alpha}^{(0)}$, $\hat{\vec{\pi}}^{(0)}$
	\end{algorithmic}
\end{algorithm}

\subsection{Bringing everything together}
\label{sec:alg-finish}
Having introduced all of the components of \alg, we are now ready to explain how they fit together. Recall that the evaluation process begins with three main inputs: the pool of record pairs $P$, similarity scores $s(\cdot)$, and predicted ER $\hat{\cR}$. A summary of the main steps involved is as follows:

\begin{enumerate}[(i)]
	\item Generate a set of strata $P_1,\ldots,P_K$ partitioning $P$ using the CSF method (Algorithm~\ref{alg:stratification-csf}).
	\item Generate initial estimates using the strata, $\hat{\cR}$ and $s(\cdot)$ (Algorithm~\ref{alg:initialisation}).
	\item Conduct AIS to estimate $F_\alpha$ (Algorithm~\ref{alg:AIS}).
\end{enumerate}

\myparagraph{Summary of Algorithm~\ref{alg:AIS}.} At each iteration $t$: sample a stratum according to $\vec{v}^{(t)}$, then a record pair within that stratum uniformly at random. Query \oracle for a label of the record pair. Use the observed label (and the predicted label) to update the oracle probabilities  (using Eqn.~\ref{eqn:hyperparameter-update-rule}) and the F-measure estimate (using Eqn.~\ref{eqn:AIS-F-estimator}). Stop after $T$ iterations and return the final estimate $\hat{F}_{\alpha}^{(T)}$.

\begin{algorithm}[t]
	\caption{\alg for estimation of the F-measure}
	\label{alg:AIS}
	\begin{algorithmic}[1]
		\Input \begin{tabular}[t]{cl}
			$T > 0$ & number of iterations \\
			$0 \leq \alpha \leq 1$ & F-measure weight \\
			$0 < \varepsilon \leq 1$ & greediness parameter \\
			$\eta > 0$ & prior strength parameter \\
			$\hat{F}_{\alpha}^{(0)}$ & initial guess for F-measure \\
			$\hat{\vec{\pi}}^{(0)}$ & initial guess for pos. probabilities \\
			$\hat{\cR}$ & predicted ER \\
		$\{P_{k}\}_{k=1}^K$ & stratum allocations \\
			\texttt{Oracle} & randomised (noisy) true labels
		\end{tabular}
		\Output \begin{tabular}[t]{cl}
			$\hat{F}_{\alpha}^{(T)}$ & F-measure estimate
		\end{tabular}
		\vspace{1ex}
		\State $\vec{\Gamma} \gets \eta \begin{bmatrix}
		\hat{\vec{\pi}}^{(0)} \\
		1 - \hat{\vec{\pi}}^{(0)}
		\end{bmatrix}$ \Comment{initialise Bayesian model}
		\For{$t \in \{1, \ldots, T\}$}
		\State Calculate $\vec{v}^{(t)}$ using Eqn.~\eqref{eqn:alg-epsilon-greedy-inst-dist}
		\State Draw $k^\star$ from $\{1,\ldots,K\}$ according to $\vec{v}^{(t)}$
		\State Draw $z^\star$ from $P_{k^\star}$ uniformly
		\State $w_t \gets \omega_k/ v_k^{(t)}$ \Comment{importance weight}
		\State $\ell_t \gets \oracle(z^\star)$ \Comment{query label from oracle}
		\State $\hat{\ell}_t \gets \hat{\ell}(z^\star)$ \Comment{record prediction}
		\State $\vec{\Gamma}_{:,k^\star} \gets \vec{\Gamma}_{:,k^\star} + \begin{bmatrix}
		\ell_t\\
		1 - \ell_t
		\end{bmatrix}$ \Comment{update posterior}
		\State $\hat{\vec{\pi}}^{(t)} \gets \vec{\Gamma}_{0,:} \mathrel{{.}{/}} (\vec{\Gamma}_{0,:} + \vec{\Gamma}_{1,:})$ \Comment{update $\vec{\pi}$ estimate}
		\State $\hat{F}_{\alpha}^{(t)} \gets \frac{\sum_{\tau = 0}^{t} w_\tau \ell_\tau \hat{\ell}_\tau}{\alpha \sum_{\tau = 0}^{t} w_\tau \hat{\ell}_\tau + (1 - \alpha)\sum_{\tau = 0}^{t} w_\tau \ell_\tau}$
		\EndFor
		\State \Return $\hat{F}_{\alpha}^{(T)}$
	\end{algorithmic}
\end{algorithm}

\section{Consistency of \alg}
\label{sec:consistency-analysis}
A fundamental requirement of any well-behaved estimation procedure is \emph{consistency}, that is,
given enough samples we want the estimate to be close to the true value with high probability. 
Nominated as one of our objectives in designing the \alg algorithm in
Section~\ref{sec:problem-formulation}, we now prove that \alg is statistically consistent.

Before we begin, we acknowledge previous theoretical work on the consistency of other AIS
algorithms, notably Population Monte Carlo (PMC)~\cite{cappe_population_2004,douc_convergence_2007,cappe_adaptive_2008} and Adaptive Multiple Importance Sampling (AMIS)~\cite{cornuet_adaptive_2012,marin_consistency_2012}. Unfortunately, we cannot directly apply these results here owing to the following differences in our setup: 
\begin{enumerate}[(i)]
	\item we do not discard and re-draw the entire sample at each iteration since it would waste our label budget;
	\item we permit the instrumental distribution to be updated based on samples from \emph{all} previous iterations (unlike \cite{douc_convergence_2007,cappe_adaptive_2008} which are restricted to the previous iterate);
	\item we examine consistency as $T \to \infty$ (others assume that the sample size increases at each iteration and examine consistency in this limit).
\end{enumerate}

Due to the dependent nature of the sample and the non-linear form of the F-measure, the proof is relatively involved and requires some build-up. In Section~\ref{sec:consistency-sample-averages}, we first consider simple AIS estimators based on sample averages, and show that strong consistency follows so long as some reasonable conditions are met. Then in Section~\ref{sec:consistency-F-measure} we extend these results to the non-linear F-measure estimator. Until this point, we assume a general instrumental distribution and updating mechanism, before finally specialising to the \alg method in Section~\ref{sec:consistency-alg}.

\subsection{Simple AIS estimators}
\label{sec:consistency-sample-averages}
Consider a random variable $X$ with probability density $p(x)$ and consider the estimation of parameter $\theta = \operatorname{E}[f(X)]$ using AIS. This involves constructing sample $\{x_1, x_2, \ldots, x_T\}$ by drawing each item sequentially from a separate instrumental distribution. Specifically, we assume that the $t$-th sample $x_t$ is drawn from an instrumental distribution with density $q_t(x_t | \vec{x}_{1:t-1})$ which depends on the $t-1$ previously sampled items $\vec{x}_{1:t-1} = \{x_1, \ldots, x_{t-1} \}$.\footnote{Beginning with an initial sampling distribution $q_1(\vec{x}_1)$.} The AIS estimator of $\theta$ is then defined as:
\begin{equation}
	\hat{\theta}^{\mathrm{AIS}} = \frac{1}{T} \sum_{t = 1}^{T} w_t  f(x_t),
	\label{eqn:AIS-estimator}
\end{equation}
which may be interpreted as an importance-weighted sample average. Here the importance weights are given by $w_t = p(x_t)/q_t(x_t)$ (we omit the conditioning on $\vec{x}_{1:t-1}$ for notational simplicity).

In order to prove that $\hat{\theta}^\mathrm{AIS}$ is consistent for $\theta$, we rely on the following lemma, which generalises the law of large numbers (LLN) to history-dependent random sequences.
\begin{lemma}
	\label{lem:LLN}
	Let $\{U_t\}_{t = 1}^{\infty}$ be a sequence of random variables and let $\vec{U}_{1:T} = \{U_{1}, U_{2}, \ldots, U_{T}\}$ denote the sequence up to index $t = T$. Suppose that the following conditions hold:
	\begin{enumerate}[(i)]
		\item $\operatorname{E}[U_1] = \theta$; \label{lem:LLN-c1}
		\item $\operatorname{E}[U_t | \vec{U}_{1:t-1}] = \theta$ for all $t > 1$; and \label{lem:LLN-c2}
		\item $\operatorname{E}[U_t^2] \leq C < \infty$ for all $t \geq 1$.\label{lem:LLN-c3}
	\end{enumerate}
	Then $\frac{1}{T} \sum_{t = 1}^{T} U_t \to \theta$ almost surely.
\end{lemma}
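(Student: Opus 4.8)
The plan is to recognise the centered partial sums as a martingale and invoke a martingale strong law. First I would introduce the centered variables $Y_t = U_t - \theta$ together with the natural filtration $\mathcal{F}_t = \sigma(U_1, \ldots, U_t)$, with $\mathcal{F}_0$ trivial. Conditions \eqref{lem:LLN-c1} and \eqref{lem:LLN-c2} say precisely that $\operatorname{E}[Y_t \mid \mathcal{F}_{t-1}] = 0$ for every $t$, so $\{Y_t\}$ is a martingale difference sequence and the partial sums $S_T = \sum_{t=1}^{T} Y_t$ form a zero-mean martingale adapted to $\{\mathcal{F}_t\}$. Applying the tower property to \eqref{lem:LLN-c1}--\eqref{lem:LLN-c2} gives $\operatorname{E}[U_t] = \theta$ for every $t$, hence $\operatorname{E}[Y_t^2] = \operatorname{E}[U_t^2] - \theta^2 \leq C - \theta^2 =: C'$, a uniform second-moment bound obtained directly from condition \eqref{lem:LLN-c3}.

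The second step is to pass from bounded second moments to almost-sure convergence of the average, which I would do via the classical combination of an $L^2$ martingale convergence argument with Kronecker's lemma. Consider the weighted martingale $M_T = \sum_{t=1}^{T} Y_t / t$, whose increments $Y_t/t$ are again martingale differences with respect to $\{\mathcal{F}_t\}$. By orthogonality of martingale differences, $\operatorname{E}[M_T^2] = \sum_{t=1}^{T} \operatorname{E}[(Y_t/t)^2] = \sum_{t=1}^{T} \operatorname{E}[Y_t^2]/t^2 \leq C' \sum_{t=1}^{\infty} t^{-2} < \infty$ uniformly in $T$, so $\{M_T\}$ is $L^2$-bounded. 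An $L^2$-bounded martingale converges almost surely to a finite limit, so $\sum_{t=1}^{\infty} Y_t / t$ converges a.s. Kronecker's lemma then yields $\frac{1}{T} \sum_{t=1}^{T} Y_t \to 0$ almost surely, which is exactly $\frac{1}{T} \sum_{t=1}^{T} U_t \to \theta$ almost surely.

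The main point of care is that the $U_t$ are history-dependent, so the ordinary Kolmogorov strong law for independent summands is unavailable; the entire argument hinges on replacing independence by the martingale-difference property supplied by conditions \eqref{lem:LLN-c1}--\eqref{lem:LLN-c2}. Everything else is routine: the uniform second-moment bound from \eqref{lem:LLN-c3} guarantees the summability $\sum_t \operatorname{E}[Y_t^2]/t^2 < \infty$ needed for the $L^2$ martingale convergence theorem, and Kronecker's lemma handles the final Ces\`aro step. An equivalent route would be to cite a packaged martingale strong law (a Chow/Kolmogorov-type theorem asserting that $\sum_t \operatorname{E}[Y_t^2]/t^2 < \infty$ implies $S_T/T \to 0$ a.s.), but I prefer spelling out the convergence theorem together with Kronecker's lemma to keep the argument self-contained.
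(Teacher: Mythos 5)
Your proof is correct, but it takes a genuinely different route from the paper's. The paper also centres the variables, but then only extracts \emph{uncorrelatedness} of the $U_t$ from the martingale-type conditions (via the tower law), computes $\operatorname{E}[(S_T-S_V)^2]=\sum_{t=V+1}^{T}\operatorname{E}[X_t^2]\leq C(T-V)$, and invokes a packaged strong law of Petrov for partial sums with controlled increment moments (with $p=2$, $r=1$) to conclude $S_T/T\to 0$ a.s. You instead exploit the full martingale-difference structure directly: the weighted martingale $M_T=\sum_{t\leq T}Y_t/t$ is $L^2$-bounded because $\sum_t \operatorname{E}[Y_t^2]/t^2<\infty$, Doob's convergence theorem gives a.s.\ convergence of $\sum_t Y_t/t$, and Kronecker's lemma finishes. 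Both arguments are sound and use exactly the stated hypotheses; each step of yours checks out (the orthogonality of the increments of $M_T$, the uniform bound $\operatorname{E}[Y_t^2]\leq C-\theta^2$ via the tower property, and the final Ces\`aro step). The trade-off is essentially which external result one is willing to cite: the paper's route is shorter on the page and, since it only uses uncorrelatedness, would apply verbatim to merely uncorrelated sequences with bounded second moments; your route is self-contained modulo two standard textbook facts and makes the martingale mechanism---which is the real reason the history-dependence is harmless---explicit. Either proof would serve the lemma equally well.
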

The proof of this lemma is given in the 
\ifdefined\ARXIV
appendix, 
\else
full report~\cite{techreport}, 
\fi
and relies on a more general theorem due to Petrov~\cite{petrov_strong_2014}.

By observing that the summands in Eqn.~\eqref{eqn:AIS-estimator} obey conditions~(i) and (ii) of Lemma~\ref{lem:LLN}, we can establish the following theorem on the strong consistency of $\hat{\theta}^{\mathrm{AIS}}$.
\begin{theorem}
	\label{thm:consistency-linear-AIS}
	The estimator in Eqn.~\eqref{eqn:AIS-estimator} is \emph{strongly consistent}, that is, $\hat{\theta}^\mathrm{AIS} \to \theta$ almost surely, provided the following conditions are met for all $t \geq 1$:
	\begin{enumerate}[(i)]
		\item $q_t(x) > 0$ whenever $f(x) p(x) \neq 0$, and
		\item $\underset{\substack{X_t \sim p \\ \vec{X}_{1:t-1} \sim g}}{\operatorname{E}} \left[\frac{p(X_t)}{q_t(X_t)} f(X_t)^2 \right] \leq C < \infty$.
	\end{enumerate}
\end{theorem}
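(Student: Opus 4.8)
The plan is to apply Lemma~\ref{lem:LLN} to the summands $U_t = w_t f(x_t) = \frac{p(x_t)}{q_t(x_t)} f(x_t)$, since $\hat{\theta}^{\mathrm{AIS}} = \frac{1}{T}\sum_{t=1}^{T} U_t$ is exactly the average the lemma concerns. It then suffices to verify the lemma's three hypotheses for this choice of sequence.

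First I would establish the conditional unbiasedness conditions~(i) and~(ii) of the lemma. Conditioning on the full sampled history $\vec{x}_{1:t-1}$, the density $q_t$ is fixed, and a single change of measure gives $\operatorname{E}[U_t \mid \vec{x}_{1:t-1}] = \int \frac{p(x)}{q_t(x)} f(x)\, q_t(x)\,dx = \int f(x) p(x)\,dx = \theta$. Here condition~(i) of the theorem is precisely what guarantees that integrating over the support of $q_t$ recovers the full integral $\int f p\,dx$: wherever $f(x) p(x)\neq 0$ we have $q_t(x)>0$, so no contribution to $\theta$ is lost on the null set $\{q_t = 0\}$. Since $\vec{U}_{1:t-1}$ is measurable with respect to $\vec{x}_{1:t-1}$, the tower property yields $\operatorname{E}[U_t \mid \vec{U}_{1:t-1}] = \operatorname{E}\big[\operatorname{E}[U_t\mid\vec{x}_{1:t-1}] \mid \vec{U}_{1:t-1}\big] = \theta$, and the $t=1$ case follows identically with $q_1$.

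The key step is verifying the uniform second-moment bound~(iii). Conditioning on the history and changing measure from $q_t$ back to $p$ deliberately absorbs one factor of the importance weight:
\begin{equation*}
\operatorname{E}[U_t^2 \mid \vec{x}_{1:t-1}] = \int \frac{p(x)^2}{q_t(x)^2} f(x)^2\, q_t(x)\,dx = \int \frac{p(x)^2}{q_t(x)} f(x)^2\,dx = \operatorname{E}_{X_t\sim p}\!\left[\frac{p(X_t)}{q_t(X_t)} f(X_t)^2 \,\Big|\, \vec{x}_{1:t-1}\right].
\end{equation*}
Taking the outer expectation over $\vec{x}_{1:t-1}$ drawn under the AIS sampling law $g$ reproduces exactly the quantity bounded by $C$ in condition~(ii) of the theorem, whence $\operatorname{E}[U_t^2]\le C$ for every $t$. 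With all three hypotheses of Lemma~\ref{lem:LLN} in force, the lemma delivers $\frac{1}{T}\sum_{t=1}^{T} U_t \to \theta$ almost surely, i.e.\ $\hat{\theta}^{\mathrm{AIS}}\to\theta$ a.s.

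I expect the main obstacle to be the bookkeeping around the randomness of $q_t$: because the instrumental density at step $t$ depends on the random history $\vec{x}_{1:t-1}$, each identity must first be derived conditionally (where $q_t$ is deterministic) and only then averaged, and one must keep straight that the change of measure in step~(iii) intentionally leaves the expectation in terms of $X_t\sim p$ so that it matches the theorem's stated moment condition rather than a $q_t$-expectation. Care is also needed to confirm that support condition~(i) is invoked only to ensure the unbiasedness integral is not corrupted by the null set $\{q_t=0\}$.
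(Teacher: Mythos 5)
Your proposal is correct and follows essentially the same route as the paper's own proof: set $U_t = \tfrac{p(X_t)}{q_t(X_t)}f(X_t)$, verify the conditional-mean conditions of Lemma~\ref{lem:LLN} by a change of measure (using the support condition), and verify the second-moment condition by conditioning on the history and absorbing one importance weight so the bound matches condition~(ii) of the theorem. Your explicit use of the tower property to pass from conditioning on $\vec{x}_{1:t-1}$ to conditioning on $\vec{U}_{1:t-1}$ is, if anything, slightly more careful than the paper's presentation.
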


\begin{proof}
	Let $U_t = \frac{p(X_t)}{q_t(X_t|\vec{X}_{1:t-1})} f(X_t)$ and $\theta = \operatorname{E}[f(X)]$. The almost sure convergence follows by checking the conditions of Lemma~\ref{lem:LLN}. For condition (ii) of the lemma, we find
	\begin{align*}
	\operatorname{E}\!\left[U_t\middle|\vec{U}_{1:t-1}\right] & = \operatorname{E}\!\left[\frac{p(X_t)}{q_t(X_t|\vec{X}_{1:t-1})} f(X_t) \middle | \vec{X}_{1:t-1} \right]\\
	& = \int_{\mathcal{X}} \frac{p(x_t)}{q_t(x_t | \vec{x}_{1:t-1})} f(x_t) q_t(x_t|\vec{x}_{1:t-1})\, d x_t \\
	& = \int_{\mathcal{X}} f(x) p(x) \, d x \tag{by condition \ref{lem:LLN-c1}}\\
	& = \operatorname{E}\!\left[f(X)\right] = \theta.
	\end{align*}
	Condition (i) of the lemma follows by a similar argument.
	
	Finally we check condition (iii): that the second moment is bounded. Denoting the joint density of $\vec{X}_{1:t-1}$ by $g$ and considering $t>1$, we have
	\begin{align*}
		& \operatorname{E}\!\left[U_t^2\right] \\  = &\operatorname{E}\!\left[\operatorname{E} \!\left[U_t^2 | \vec{U}_{1:t-1}\right] \right]\\	
		=&\operatorname{E} \! \left[\operatorname{E} \! \left[\left(\frac{p(X_t)}{q_t(X_t | \vec{X}_{1:t-1})} f(X_t)\right)^2 \middle| \vec{X}_{1:t-1} \right]\right] \\
		=& \iint_{\mathcal{X}} \left(\frac{p(x_t)f(x_t)}{q_t(x_t | \vec{x}_{1:t-1})}  \right)^2 q_t(x_t|\vec{x}_{1:t-1}) d x_t 
	g(\vec{x}_{1:t-1}) d\vec{x}_{1:t-1} \\
		=& \iint_{\mathcal{X}} \frac{p(x_t) f(x_t)^2}{q_t(x_t | \vec{x}_{1:t-1})}  p(x_t)\, d x_t \ g(\vec{x}_{1:t-1}) d\vec{x}_{1:t-1} \tag{by (i)}\\
		=& \underset{\substack{X_t \sim p \\ \vec{X}_{1:t-1} \sim g}}{\operatorname{E}} \left[\frac{p(X_t)}{q_t(X_t|\vec{X}_{1:t-1})} f(X_t)^2 \right]
	\end{align*}
	which is bounded above by assumption. This also holds for $t=1$ (by the above argument without the sampling history). Thus all of the conditions of Lemma~\ref{lem:LLN} are satisfied, and the proof is complete.
\end{proof}

\subsection{The AIS F-measure estimator}
\label{sec:consistency-F-measure}
The AIS estimator for the F-measure, $\hat{F}_\alpha^{\mathrm{AIS}}$, is less straightforward to analyse because it cannot be expressed as a sample average like the estimators studied in Section~\ref{sec:consistency-sample-averages}. Instead, we regard $\hat{F}_\alpha^{\mathrm{AIS}}$ as a \emph{ratio} of sample averages:
\begin{equation*}
	\hat{F}_\alpha^{\mathrm{AIS}} = \frac{\frac{1}{T}\sum_{t = 1}^{T} w_t f_\mathrm{num}(x_t)}{\frac{1}{T}\sum_{t = 1}^{T} w_t f_\mathrm{den}(x_t)},
\end{equation*}
 (\cf Eqn.~\ref{eqn:AIS-F-estimator}) where $x_t = (z_t, \ell_t)$ denotes a record pair and its observed label, and the functions are 
\begin{equation}
\begin{split}
	f_\mathrm{num}(x_t) &= \ell_t \hat{\ell}_t\enspace ; \quad \text{and} \\
	f_\mathrm{den}(x_t) &= \alpha \hat{\ell}_t + (1 - \alpha) \ell_t\enspace.
\end{split}
\label{eqn:definition-fnum-fden}
\end{equation}

We leverage Theorem~\ref{thm:consistency-linear-AIS} to show that the numerator and denominator
both converge to their respective true values, which is sufficient to establish convergence of
$\hat{F}_\alpha^{\mathrm{AIS}}$.

\begin{theorem}
	\label{thm:consistency-F-alg}
	Let $X = (Z, L)$ denote a random record pair $Z$ and its corresponding label $L$, and let the density of $X$ be $p(x) = p(\ell | z) p(z) $. Suppose AIS is carried out to estimate the F-measure and assume that the conditions of Theorem~\ref{thm:consistency-linear-AIS} are satisfied by $p(x)$ and $q_t(x)$ for both functions defined in Eqn.~\eqref{eqn:definition-fnum-fden}. Assume furthermore that the instrumental density can be factorised as $q_t(x_t| \vec{x}_{1:t-1}) = p(\ell_t | \vec{z}_t) q_t(z_t|\vec{x}_{1:t-1})$ for all $t \geq 1$. Then $\hat{F}_\alpha^\mathrm{AIS}$ is \emph{weakly consistent} for $F_\alpha$.
\end{theorem}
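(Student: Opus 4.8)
The plan is to exploit the ratio structure already exposed in Eqn.~\eqref{eqn:definition-fnum-fden}: $\hat{F}_\alpha^{\mathrm{AIS}}$ is the quotient of two importance-weighted sample averages, one built from $f_{\mathrm{num}}$ and one from $f_{\mathrm{den}}$. Since Theorem~\ref{thm:consistency-linear-AIS} already delivers strong consistency for any single importance-weighted average meeting its two conditions, my first step is to invoke that theorem twice---once with $f = f_{\mathrm{num}}$ and once with $f = f_{\mathrm{den}}$---to obtain $\frac{1}{T}\sum_{t=1}^T w_t f_{\mathrm{num}}(x_t) \to \theta_{\mathrm{num}} := \operatorname{E}[f_{\mathrm{num}}(X)]$ and $\frac{1}{T}\sum_{t=1}^T w_t f_{\mathrm{den}}(x_t) \to \theta_{\mathrm{den}} := \operatorname{E}[f_{\mathrm{den}}(X)]$ almost surely. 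The hypotheses of Theorem~\ref{thm:consistency-F-alg} are tailored precisely so that this is legitimate, since the conditions of Theorem~\ref{thm:consistency-linear-AIS} are assumed to hold for both functions; only the bookkeeping then remains.

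The factorisation $q_t(x_t \mid \vec{x}_{1:t-1}) = p(\ell_t \mid z_t)\, q_t(z_t \mid \vec{x}_{1:t-1})$ is central to the second step, identifying these two limits. Because the label is drawn from the true oracle conditional rather than from the instrumental distribution, the importance weight collapses to $w_t = p(z_t)/q_t(z_t \mid \vec{x}_{1:t-1})$, and the conditional-expectation computation inside Theorem~\ref{thm:consistency-linear-AIS} integrates the label against $p(\ell \mid z)$. Carrying this out yields $\theta_{\mathrm{num}} = \operatorname{E}_{Z}[\hat{\ell}(Z)\, p(1 \mid Z)]$ and $\theta_{\mathrm{den}} = \alpha\,\operatorname{E}_{Z}[\hat{\ell}(Z)] + (1-\alpha)\,\operatorname{E}_{Z}[p(1 \mid Z)]$.

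Third, I would connect these limits to the target $F_\alpha$ of Eqn.~\eqref{eqn:F-measure}. Recalling that $F_\alpha = \lim_{T\to\infty} F_{\alpha,T}$ under passive i.i.d.\ sampling from $p$, the ordinary strong law applied to the three counts of Eqn.~\eqref{eqn:F-measure-finite} gives $\TP/T \to \operatorname{E}_{Z}[\hat{\ell}(Z)\, p(1 \mid Z)]$, $(\TP+\FP)/T \to \operatorname{E}_{Z}[\hat{\ell}(Z)]$, and $(\TP+\FN)/T \to \operatorname{E}_{Z}[p(1 \mid Z)]$, whence $F_\alpha = \theta_{\mathrm{num}}/\theta_{\mathrm{den}}$---exactly the ratio of the two AIS limits.

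Finally I would recombine via the continuous mapping theorem: both averages converge almost surely to deterministic constants, so the pair converges jointly almost surely, and the map $(a,b)\mapsto a/b$ is continuous at $(\theta_{\mathrm{num}}, \theta_{\mathrm{den}})$ provided $\theta_{\mathrm{den}} > 0$; hence $\hat{F}_\alpha^{\mathrm{AIS}} \to F_\alpha$ almost surely, which in particular yields the claimed convergence in probability. The main obstacle is precisely this nonlinearity: Theorem~\ref{thm:consistency-linear-AIS} says nothing about ratios, so the argument hinges on (a) confirming that the denominator limit $\theta_{\mathrm{den}}$ is strictly positive---equivalently, that $F_\alpha$ is well-defined and nondegenerate---so that the quotient map is continuous at the limit point, and (b) checking that the AIS limits coincide with the defining expectations of $F_\alpha$, which is where the factorisation hypothesis does the essential work of routing the label through the true oracle conditional rather than the biased sampler.
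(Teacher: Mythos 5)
Your proof is correct and takes essentially the same route as the paper's: both decompose $\hat{F}_\alpha^{\mathrm{AIS}}$ as a ratio of importance-weighted sample averages, apply Theorem~\ref{thm:consistency-linear-AIS} separately to $f_{\mathrm{num}}$ and $f_{\mathrm{den}}$ using the factorisation of $q_t$ to collapse the weights onto $z$ alone, and then pass to the quotient (the paper via Slutsky's theorem, you via the continuous mapping theorem---an immaterial difference here). You are in fact slightly more careful than the paper in explicitly identifying the two limits, checking that their ratio equals $F_\alpha$, and flagging the need for $\theta_{\mathrm{den}}>0$, which the paper leaves implicit.
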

\begin{proof}
	Observe that for the numerator of $\hat{F}_\alpha^{\mathrm{AIS}}$,
	\begin{equation*}
	\frac{1}{T} \sum_{t = 1}^{T} \frac{p(Z_t)}{q_t(Z_t)} f_\mathrm{num}(X_t) = \frac{1}{T} \sum_{t = 1}^{T} \frac{p(X_t)}{q_t(X_t)} f_\mathrm{num}(X_t)
	\end{equation*}
	using the factorised form of $q_t(x)$. This converges in probability to $\operatorname{E}[f_\mathrm{num}(X)]$ by Theorem~\ref{thm:consistency-linear-AIS}. The same is true for the denominator (replace $f_\mathrm{num}$ by $f_\mathrm{den}$). Invoking Slutsky's theorem, we have
	\begin{equation*}
	\hat{F}_\alpha^{\mathrm{AIS}} = \frac{\frac{1}{T}\sum_{t = 1}^{T} \frac{p(Z_t)}{q_t(Z_t)} f_\mathrm{num}(X_t)}{\frac{1}{T}\sum_{t = 1}^{T} \frac{p(Z_t)}{q_t(Z_t)} f_\mathrm{den}(X_t)} \overset{P}{\longrightarrow} \frac{\operatorname{E}[f_\mathrm{num}(X)]}{\operatorname{E}[f_\mathrm{den}(X)]}
	\end{equation*}
	It is straightforward to show that the expression on the right-hand side reduces to $F_\alpha$ by evaluating the expectations with respect to $p$ for finite pool $P$. For the more general case,
	it can be shown that the F-measure statistics $F_{\alpha,T}$ converge to the right-hand side
	population-based F-measure~\cite{sawade_active_2010}.
\end{proof}


\subsection{Application to \alg}
\label{sec:consistency-alg}
Theorem~\ref{thm:consistency-F-alg} tells us about the convergence of $\hat{F}_\alpha^{\mathrm{AIS}}$ for \emph{any choice of instrumental distribution and update mechanism meeting the conditions}. Our final remaining task is to show that these conditions are met by Algorithm~\ref{alg:AIS}.
\begin{theorem}
	Algorithm~\ref{alg:AIS} (\alg) produces a consistent estimate of $F_\alpha$, that is $\hat{F}_\alpha^{(T)} \overset{P}{\to} F_\alpha$.
	\label{thm:consistency-alg}
\end{theorem}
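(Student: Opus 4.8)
The plan is to invoke Theorem~\ref{thm:consistency-F-alg}, which already guarantees weak consistency of $\hat{F}_\alpha^{\mathrm{AIS}}$ for \emph{any} instrumental distribution and update rule meeting its hypotheses. Since Algorithm~\ref{alg:AIS} computes precisely the estimator $\hat{F}_\alpha^{\mathrm{AIS}}$, the entire task reduces to verifying those hypotheses for the specific \alg sampler. First I would make the instrumental distribution over pairs explicit: drawing a stratum $k$ with probability $v_k^{(t)}$ and then a pair uniformly within $P_k$ induces the density $q_t(z) = v_{k(z)}^{(t)}/|P_{k(z)}|$, where $k(z)$ denotes the stratum containing $z$. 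With $p$ the natural uniform law on $P$ (so $p(z) = 1/N$), one checks that $p(z)/q_t(z) = \omega_{k(z)}/v_{k(z)}^{(t)}$, which is exactly the importance weight $w_t$ computed in line~6 of Algorithm~\ref{alg:AIS}.

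Next I would discharge the factorisation requirement of Theorem~\ref{thm:consistency-F-alg}. In \alg the label is obtained by querying the oracle on the sampled pair, $\ell_t \sim \oracle(z^\star)$, so conditionally on the history the joint law of $x_t = (z_t, \ell_t)$ factors as $q_t(z_t \mid \vec{x}_{1:t-1})\, p(\ell_t \mid z_t)$, which is the assumed form; this also causes the conditional label density to cancel in the weight, leaving $p(z)/q_t(z)$ as above. It then remains to verify conditions (i) and (ii) of Theorem~\ref{thm:consistency-linear-AIS} for both $f_\mathrm{num}$ and $f_\mathrm{den}$ of Eqn.~\eqref{eqn:definition-fnum-fden}. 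For the positivity condition (i), the $\varepsilon$-greedy form~\eqref{eqn:alg-epsilon-greedy-inst-dist} gives $v_k^{(t)} = \varepsilon\,\omega_k + (1-\varepsilon)\,v_k^{\star(t)} \geq \varepsilon\,\omega_k > 0$ for every non-empty stratum (empty strata having been removed in Algorithm~\ref{alg:stratification-csf}), whence $q_t(z) > 0$ for all $z \in P$ and condition (i) holds unconditionally.

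The second-moment condition (ii) is where the $\varepsilon$-greedy construction does its real work, and I expect it to be the crux. The same lower bound $v_k^{(t)} \geq \varepsilon\,\omega_k$ yields a \emph{uniform} upper bound on the weights, $w_t = \omega_k/v_k^{(t)} \leq 1/\varepsilon$, independent of $t$ and of the sampling history. Because $f_\mathrm{num}(x) = \ell\hat{\ell}$ and $f_\mathrm{den}(x) = \alpha\hat{\ell} + (1-\alpha)\ell$ both take values in $[0,1]$, the integrand $\frac{p(X_t)}{q_t(X_t)} f(X_t)^2 \leq 1/\varepsilon$ pointwise, so its expectation is bounded by $C = 1/\varepsilon < \infty$ for all $t$, giving condition (ii). With both conditions established for $f_\mathrm{num}$ and $f_\mathrm{den}$ and the factorisation in hand, Theorem~\ref{thm:consistency-F-alg} applies and delivers $\hat{F}_\alpha^{(T)} \overset{P}{\to} F_\alpha$. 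The conceptual point worth stressing is that the single lower bound $v_k^{(t)} \geq \varepsilon\,\omega_k$ simultaneously secures exploration (positivity) and control of the importance weights (bounded second moment); without the $\varepsilon$-greedy mixing, the optimal distribution $v_k^{\star(t)}$ could vanish on a stratum, breaking both conditions at once.
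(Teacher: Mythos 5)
Your proposal is correct and follows essentially the same route as the paper's own proof: express the stratified sampler as a density over the pool, verify the factorisation of $q_t(x)$, and use the $\varepsilon$-greedy lower bound $v_k^{(t)} \geq \varepsilon\,\omega_k$ to establish both positivity and the uniform weight bound $w_t \leq 1/\varepsilon$. Your treatment of the second-moment condition is marginally slicker (a pointwise bound using $f_\mathrm{num}, f_\mathrm{den} \in [0,1]$ rather than the paper's explicit integral computations, which arrive at the same constant $1/\varepsilon$), but this is a stylistic rather than substantive difference.
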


The proof is straightforward, while lengthy, and so is relegated to 
\ifdefined\ARXIV
the appendix. 
\else
the full report~\cite{techreport}. 
\fi
It proceeds by checking that the conditions of Theorem~\ref{thm:consistency-F-alg} are satisfied
by the \alg instrumental distribution.

\begin{remark}
It is now apparent why we adopt the $\varepsilon$-greedy instrumental distribution: while
$q_t^\star(z)$ can go to zero when $p(z) \neq 0$, violating condition~(i) of
Theorem~\ref{thm:consistency-linear-AIS}, $\varepsilon$-greedy cannot. For example, if
$\hat{\pi}_k = 0$ and $\lambda_k = 0$ then $q_t^\star(z) = 0$ for all $z \in P_k$, whilst $p(z) = 1/N \neq 0$. The $\varepsilon$-greedy instrumental distribution does not vanish since $q_t(z) = \varepsilon/N > 0$.
\end{remark}

\begin{table}
	\centering
	\caption{Datasets in decreasing order of class imbalance. The size of the dataset is the number of record pairs it contains and the imbalance ratio is the ratio of non-matches to matches. The $\star$ indicates that the dataset is not from the ER domain.}
	\begin{tabular}{ >{\ttfamily}l r c c}
		\hline
		\multicolumn{1}{l}{\multirow{2}{*}{Dataset Name}} & \multicolumn{1}{c}{\multirow{2}{*}{Size}} & Imb. & No. \\
		& & Ratio & Matches \\
		\hline
		Amazon-GoogleProducts & 4,397,038 & 3381 & 1300 \\
		restaurant            &   745,632 & 3328 & 224 \\
		DBLP-ACM              & 5,998,880 & 2697 & 2224 \\
		Abt-Buy               & 1,180,452 & 1075 & 1097 \\
		cora                  & 1,675,730 & 47.76 & 34,368 \\
		$\star$ tweets100k    &   100,000 & 1 & 50,000 \\
		\hline
	\end{tabular}
	\label{tbl:datasets}
\end{table}

\section{Experiments}\label{sec:expt}
In this section, we examine whether \alg addresses our main objective of reducing labelling requirements for evaluating ER. We run comprehensive experiments comparing \alg with established methods, which conclusively establish that \alg is generally superior, requiring significantly fewer labels to achieve a given precision of estimate. \\

\subsection{Experimental setup}
\subsubsection{Datasets}
We use five publicly available ER datasets as listed in Table~\ref{tbl:datasets}. All datasets come with true resolution $\cR$. \texttt{Abt-Buy}~\cite{kopcke_evaluation_2010} and \texttt{Amazon-GoogleProducts}~\cite{kopcke_evaluation_2010} are from the e-commerce domain; \texttt{cora}~\cite{riddle_datasets} and \texttt{DBLP-ACM}~\cite{kopcke_evaluation_2010} relate to computer science citations; and \texttt{restaurant} contains listings from two restaurant guidebooks~\cite{riddle_datasets}. We note that \texttt{cora} is unique among these datasets, in that it does not arise from two separate DBs. Technically, it is an example of \term{de-duplication}, which may be cast as ER on the DB matched with itself.

In addition to these five datasets, we have also included \texttt{tweets100k}~\cite{mozafari_scaling_2014} from outside the ER domain. It is included to test whether the sampling methods are competitive in the \emph{absence} of class imbalance.

\myparagraph{Pooling.}
Although evaluation is ideally conducted with respect to the entire pool, $P = \cZ$, a key baseline sampling method (IS, introduced in Section~\ref{sec:expt-baseline-methods}) is prohibitively slow for such large pools (\cf Section~\ref{sec:expt-timing}) since its instrumental distribution is defined on each record pair. \alg does not suffer from this drawback and runs efficiently on entire pools. However to complete a fair comparison, we opt to conduct the evaluation with respect to smaller pools drawn randomly from $\cZ$, which are listed in Table~\ref{tbl:pools}. This does not affect the validity of the theory/algorithm; indeed relative to (significant) randomised pools, $F_\alpha$ is with high probability exceedingly close to that defined relative to \cZ.

\myparagraph{Oracle.}
We implement an oracle based on the ground truth resolution $\cR$ provided per dataset. Since only one label is provided per record pair, we are in the regime of a deterministic \oracle \ie with probabilities $p(1|z)\in\{0,1\}$.

\subsubsection{ER pipeline}
We build a simple ER pipeline with the following features:

\myparagraph{Pre-processing.} Strings are normalised by removing symbols, accents \& capitalisation. Numeric fields are converted to floats and missing values are imputed using the mean.

\myparagraph{Similarity features.} For each pair of fields (\eg the `Name' fields of $\cD_1$ and $\cD_2$) we calculate a scalar feature based on some measure of their similarity. For short textual fields we the Jaccard distance based on trigrams and for long textual fields we use cosine similarity with a tf-idf vector representation. For numeric fields we use the normalised absolute difference.

\myparagraph{Record pair classifier.} At the core of the ER pipeline is a binary classifier, which operates on the space of similarity features. We generally use a linear SVM (L-SVM), trained on a random subset of the entire dataset (including ground truth labels). Since we would like to test the evaluation in a range of circumstances, we don't always aim for the best classifier---we instead aim for a range of classifiers with excellent performance through to poor.

\subsection{Baseline methods}
\label{sec:expt-baseline-methods}
We compare \alg with three baseline methods.

\myparagraph{Passive.} This simple method samples record pairs uniformly at random from the pool with replacement. At each iteration, the F-measure is estimated using Eqn.~\eqref{eqn:F-measure-finite}, based only on the record pairs\slash labels sampled so far.

\myparagraph{Stratified.} This method has been used previously in~\cite{druck_toward_2011} for estimating balanced F-measures. It involves partitioning the pool of record pairs into strata (we set $K=30$) using Algorithm~\ref{alg:stratification-csf}. Record pairs are then sampled by drawing a stratum according to the stratum weights ($\omega_k = |P_k|/N$), then sampling within the stratum uniformly. The F-measure is estimated using a stratified version of Eqn.~\eqref{eqn:F-measure} (see~\cite{druck_toward_2011}).

\myparagraph{IS.} Non-adaptive importance sampling has been used for evaluating F-measures in~\cite{sawade_active_2010}: record pairs are sampled according to a \emph{static} instrumental distribution which aims to approximate Eqn.~\eqref{eqn:optimal-inst-dist}. IS may be far from optimal depending on score reliability, since the approximation replaces $p(1|z)$ with the similarity scores (mapped to the unit interval). The estimate of the F-measure is obtained at each iteration using a static version of Eqn.~\eqref{eqn:AIS-F-estimator}.

\begin{table*}
	\centering
	\caption{Pools sampled from the datasets in Table~\ref{tbl:datasets}, along with the true performance measures.}
	\begin{tabular}{>{\ttfamily}lrcccccc}
		\hline
		\multicolumn{1}{l}{Associated Dataset} & \multicolumn{1}{c}{Size} & Imb.\ ratio & No.\ matches & Classifier & Precision & Recall & $F_{1/2}$ \\
		\hline
		Amazon-GoogleProducts & 676,267 & 3381 & 200 & L-SVM & 0.597 & 0.185 & 0.282 \\
		restaurant            & 149,747 & 3328 & 45 & L-SVM & 0.909 & 0.888 & 0.899 \\
		DBLP-ACM              &  53,946 & 2697 & 20 & L-SVM & 1.0 & 0.9 & 0.947 \\
		Abt-Buy               &  53,753 & 1075 & 50 & L-SVM & 0.916 & 0.44 & 0.595 \\
		cora                  & 328,291 & 47.76 & 6874 & L-SVM & 0.841 & 0.837 & 0.839 \\
		$\star$ tweets100k    &  20,000 & 0.9903 & 10049 & L-SVM & 0.762 & 0.778 & 0.770 \\
		\hline
	\end{tabular}
	\label{tbl:pools}
\end{table*}

\begin{figure*}
	\centering
	\includegraphics[width=2.1in]{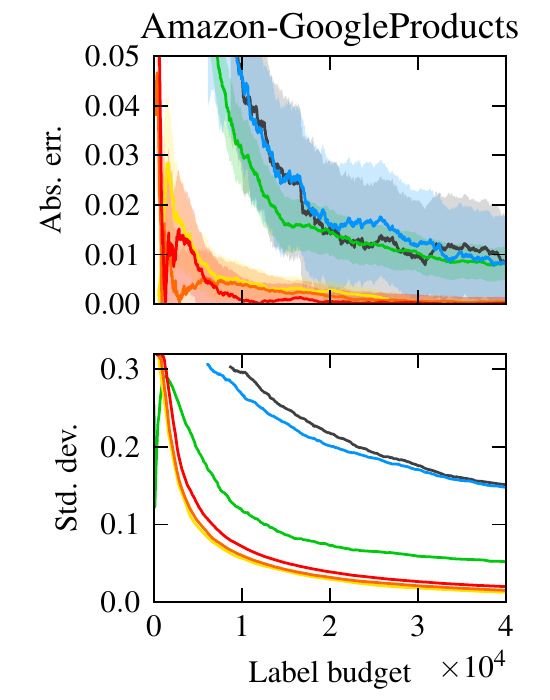}
	\hfill
	\includegraphics[width=2.1in]{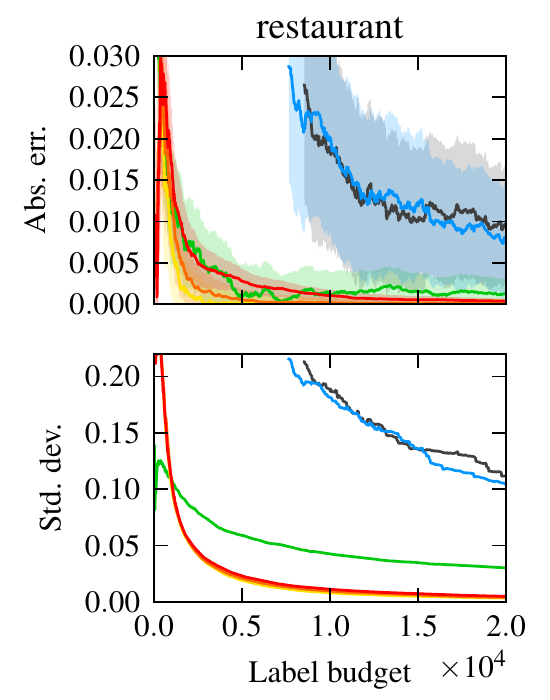}
	\hfill
	\includegraphics[width=2.1in]{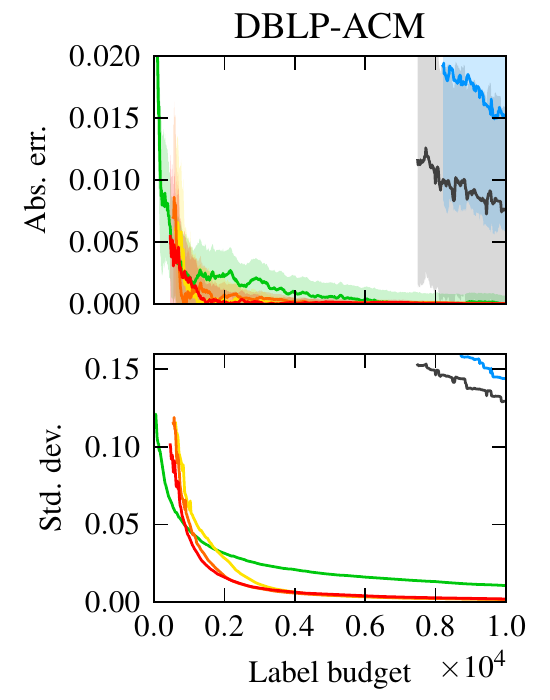}
	\\[1em]
	\includegraphics[width=2.1in]{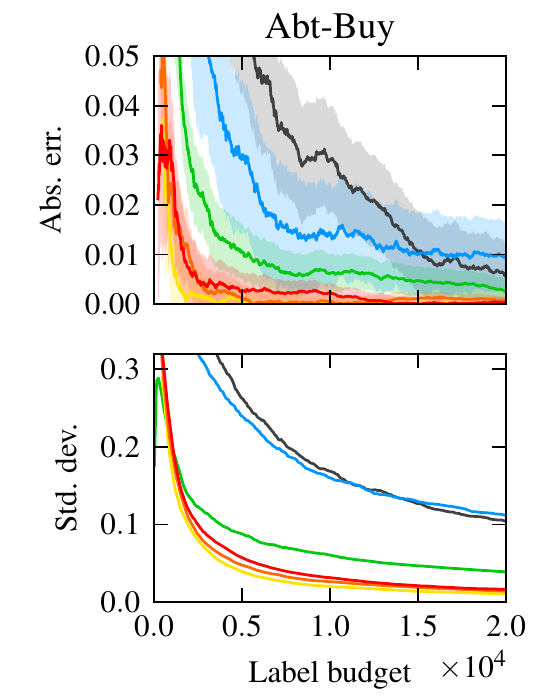}
	\hfill
	\includegraphics[width=2.1in]{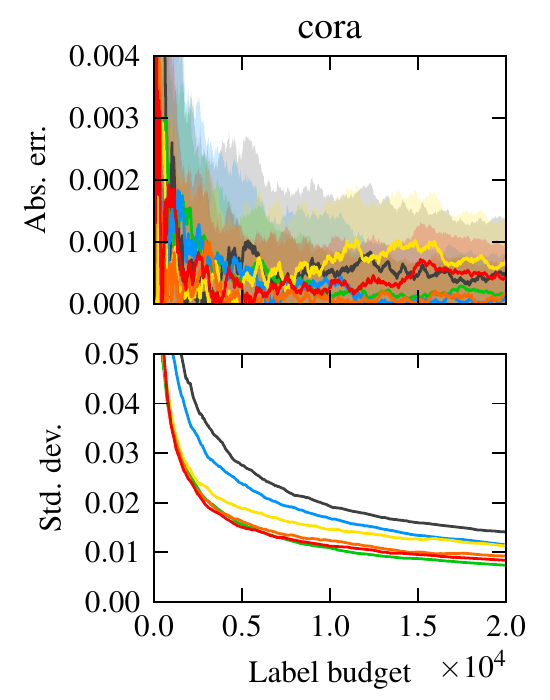}
	\hfill
	\includegraphics[width=2.1in]{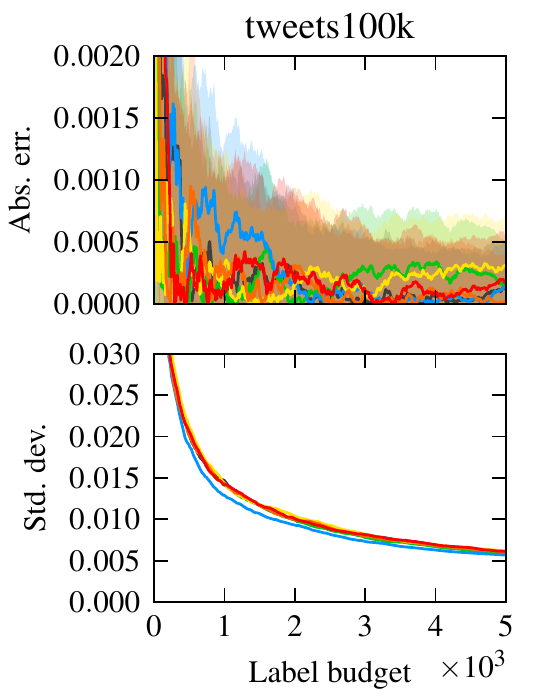}
	\\[2em]
	\includegraphics[width=5in]{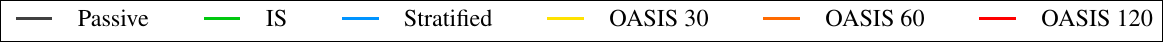}
	\caption{Plots showing the expected absolute error (abs.\ err.) and the standard deviation (std. dev.) of $\hat{F}_{1/2}$ for the different estimation methods (Passive, Stratified, IS, \alg) as a function of label budget. The \alg method is run with $K= 30, 60 \text{ and } 120$ (except on \texttt{tweets100k} where $K = 10, 20 \text{ and } 40$). This figure is best viewed in colour.}
	\label{fig:abs-err-and-std-dev-vs-labels}
\end{figure*}


\subsection{Results}
Since each estimation method is randomised, we study their behaviour statistically. For each pool in Table~\ref{tbl:pools}, we run each estimation method 1000 times, recording the history of estimates for each run in a vector: $[\hat{F}_\alpha^{(t)}]_{t = 1:T}$. In all of the experiments, we set $\alpha = 1/2$, $\eta = 2K$ and $\varepsilon = 10^{-3}$.

\subsubsection{Label budget savings}
To compare the labelling requirements of the different estimation methods, we plot the expected absolute error $\operatorname{E}[|\hat{F}_\alpha - F_\alpha|]$ (abbreviated as abs.\ err.) as a function of the label budget.\footnote{Note that the label budget is not equivalent to the number of iterations. Since we are sampling with replacement, the same record pair may be drawn at multiple iterations, however it only counts towards the label budget the first (and only) time its label is queried from the oracle.} To compute abs.\ err.\ we average over 1000 repeats for fixed $P$. The true F-measure, $F_\alpha$, is calculated on $P$ using Eqn.~\eqref{eqn:F-measure-finite}, assuming all labels are known immediately. The results are presented in Figure~\ref{fig:abs-err-and-std-dev-vs-labels} for each pool in Table~\ref{tbl:pools}. Below the abs.\ err.\ plot, we have also plotted the standard deviation of the estimate, which is useful for checking whether the variance reduction methods (IS and \alg) are operating as designed.

\myparagraph{Winning method.} \alg beats the other methods, significantly improving on the state-of-the-art, both in terms of the abs.\ err.\ and the variance, on all of the ER datasets except \texttt{cora} where it is competitive. The reason for the anomalous behaviour on \texttt{cora} is likely due to the fact that the class imbalance is far less pronounced.

\myparagraph{Inadequacy of passive sampling.} The experiments confirm our claim that passive sampling is a poor choice for evaluating ER. Compared to IS and \alg, passive sampling demonstrates significantly slower convergence, and is less reliable due to the high variance. In fact, passive sampling often cannot produce any estimate at all until a significant label budget has been consumed (\cf \eg \texttt{DBLP-ACM}). This is because the F-measure remains undefined until a match (or predicted match) is sampled for the first time. We only begin plotting the curve when the estimate has a probability exceeding 95\% of being well-defined.

\myparagraph{Stratified method.}
This method does not fare much better than passive sampling, casting doubt on its
effectiveness for efficient evaluation as proposed in~\cite{druck_toward_2011}. We expect that the reason for the poor performance is due to the fact that the sampling is not biased (merely proportional to $\omega_k$).

\myparagraph{Balanced classes.} For the case of more balanced classes, as in \texttt{tweets100k}, and to a lesser extent \texttt{cora}, there is effectively no difference between the methods. This implies that the advantage of IS and \alg over the other methods diminishes as the imbalance ratio decreases. It is important to note however, that the balanced regime is of little relevance to ER---we merely include it for completeness. 

\subsubsection{Calibrated vs.\ uncalibrated scores}
In the experiments thus far (in Figure~\ref{fig:abs-err-and-std-dev-vs-labels}), we have been evaluating ER pipelines based on linear SVMs. The similarity scores from such systems are distances from the decision hyperplane, which are not intended to approximate the oracle probabilities $p(1|z)$ accurately (they are ``uncalibrated'' \cf Definition~\ref{def:calibration}). As such, we expect the performance of IS to be less favourable, because the instrumental distribution will be further from optimality if $s_i \approx p(1|z_i)$ is not satisfied. Much less degradation is expected under \alg.

In order to assess whether this has an appreciable effect, we compared running IS and \alg with calibrated versus uncalibrated similarity scores. The calibrated (probabilistic) scores are obtained using a built-in costly feature of LIBSVM, which runs five-fold cross-validation at training time~\cite{chang_libsvm_2011}. The uncalibrated scores are distances from the decision hyperplane used previously. The results in Figure~\ref{fig:calibrated-vs-uncalibrated} show that the calibrated scores yield significantly better performance, particularly for IS. However, the difference is less pronounced for \alg, which does a good job of learning the true oracle probabilities from the incoming labels.

\begin{figure}
	\centering
	\includegraphics[width=0.49\linewidth]{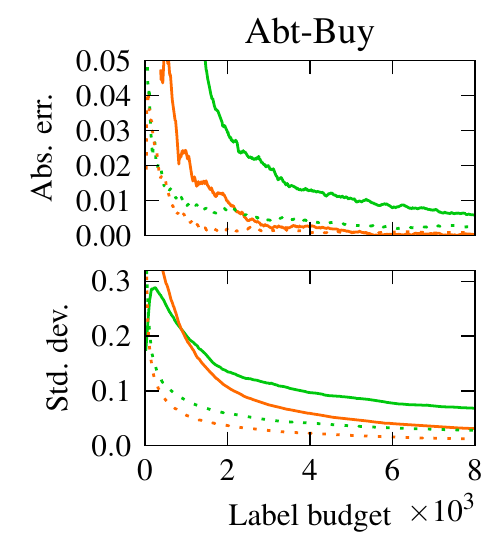}
	\includegraphics[width=0.49\linewidth]{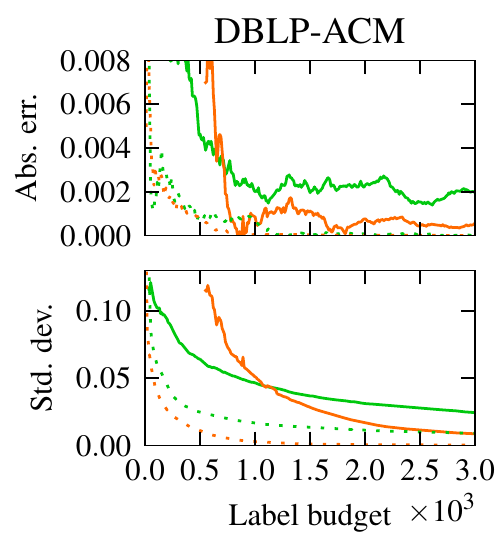}
	\\[1em]
	\includegraphics[width=0.9\linewidth]{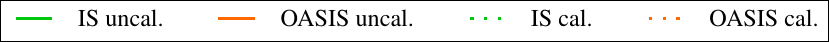}
	\caption{Comparison of calibrated vs.\ uncalibrated scores for IS \& \alg (run with $K = 60$).}
	\label{fig:calibrated-vs-uncalibrated}
\end{figure}

\subsubsection{Convergence of the model parameters}
We have observed excellent convergence properties for \alg in terms of the F-measure estimate. An interesting supplementary question is whether the estimates of the oracle probabilities (and in turn the instrumental distribution) also converge rapidly to their true (optimal) values. Although we have not studied this question theoretically, we have observed convergence in a limited number of experiments with \texttt{Abt-Buy}. An example is depicted in Figure~\ref{fig:convergence-inst-dist}. Heatmap plot~(b) demonstrates that the estimates of the oracle probabilities for this run converge quite rapidly: after $\sim4000$ labels are consumed. However, the instrumental distribution takes longer to converge, because it is very sensitive to slight errors in the estimates. It does not reach optimality until after $\sim8500$ labels are consumed. This is easiest to see in the KL divergence plot~(d), where a value of zero indicates convergence.

\begin{figure}[t]
	\centering
	\includegraphics[width=0.96\linewidth]{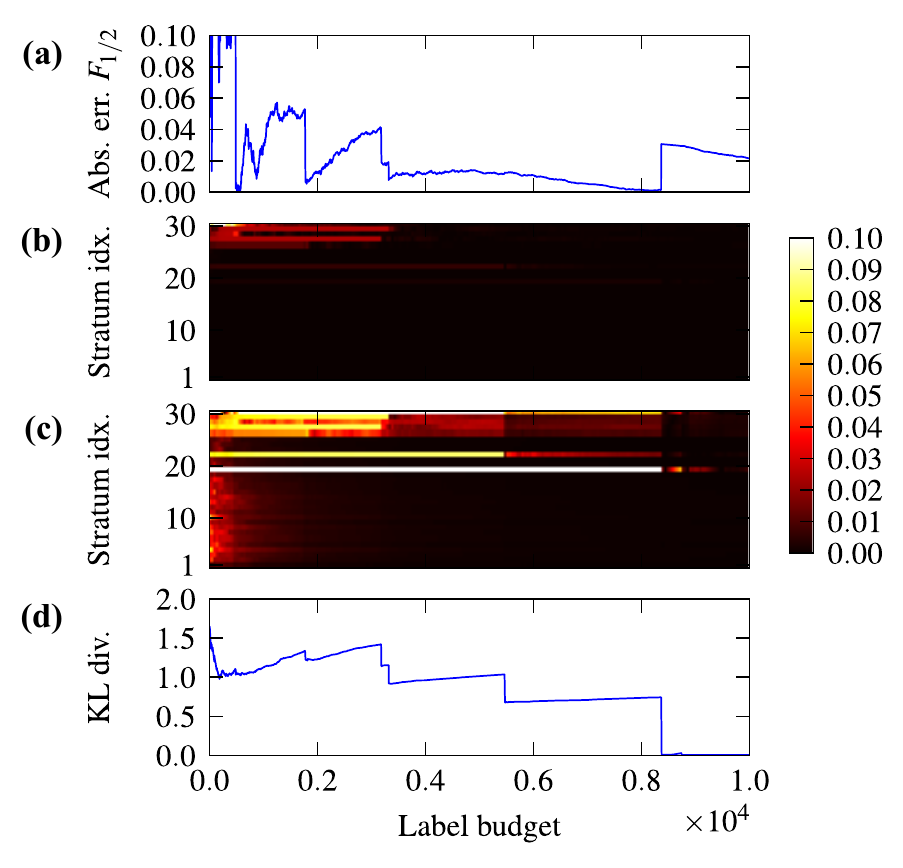}
	\caption{Convergence of the F-measure, oracle probabilities and instrumental distribution for a run of \alg on the Abt-Buy SVM dataset (with calibrated scores and $K = 30$): (a) absolute error in $F_{1/2}$; (b) absolute error in $\vec{\pi}$; (c) absolute error in $\vec{v}^{\star}$; (d) KL divergence from $\vec{v}^\star$ to the estimate $\vec{v}^{\star (t)}$.}
	\label{fig:convergence-inst-dist}
\end{figure}

\subsubsection{Effectiveness for different classifiers}
Although we have focussed on evaluating ER based on linear SVMs so far, there is essentially no limitation on the types of classifiers that can be evaluated, so long as they produce some kind of similarity scores. To this end, we re-run our experiments on the \texttt{Abt-Buy} pool using four additional types of classifiers: a neural network (multi-layer perceptron) with one hidden layer (NN), a boosted decision tree AdaBoost (AB), logistic regression (LR), and SVM with a RBF kernel (R-SVM). We implement the classifiers using scikit-learn with the default parameter options.

The expected estimation error for each method (Passive, Stratified, IS and \alg) is evaluated after 5000 labels are consumed and the results are plotted in Figure~\ref{fig:bar-plot-compare-classifiers}. We see that \alg generally outperforms the other methods, yielding an estimate of $F_{1/2}$ which is one order of magnitude more precise than IS. 

\begin{figure}
	\centering
	\includegraphics[width=0.99\linewidth]{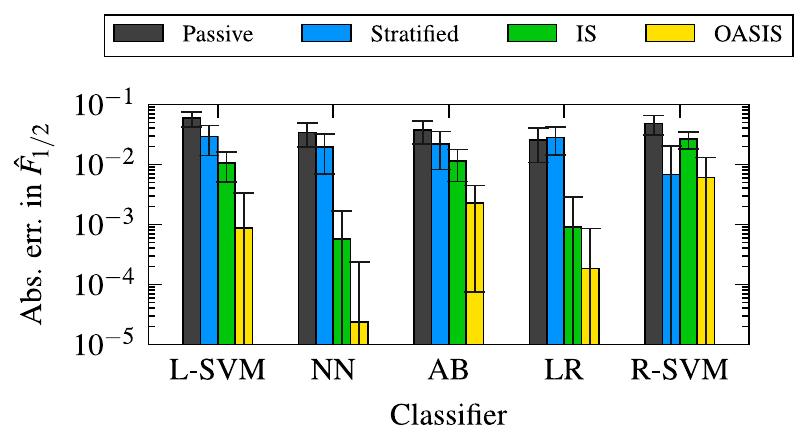}
	\caption{Expected absolute error in $\hat{F}_{1/2}$ for five classifiers trained on the Abt-Buy dataset. The error is measured after 5000 labels are consumed by each method (Passive, Stratified, IS, \alg). The error bars are approx.\ 95\% confidence intervals.}
	\label{fig:bar-plot-compare-classifiers}	
\end{figure}

\subsubsection{Runtime}
\label{sec:expt-timing}
We present evidence that the IS method scales poorly to large pools in Table~\ref{tbl:timings}, which lists the average CPU times for experiments on the \texttt{cora} dataset (pool size $N \sim 10^5$). The experiments were run on an HP EliteBook 840 G2 with 2.6GHz Core~i7 and 16GB RAM. Note that the times listed for the OASIS and Stratified methods exclude pre-computation of the strata, which takes less than 0.1~s. Looking at the results, we see that IS is an order of magnitude slower than \alg---in fact, the timing for IS appears to scale linearly in $N$ based on other timing data (not shown due to space constraints). The reason for this, is that IS samples from a non-uniform distribution over the entire pool (a computation linear in size $N$), whilst OASIS samples from a smaller non-uniform distribution over the strata (of size $K$). It appears that the extra operations OASIS requires to update the model are negligible in comparison.

\begin{table}
	\centering
	\caption{CPU times for the \texttt{cora} experiment.}
	\begin{tabular}{lrr}
		\hline
		Sampling method & \multicolumn{1}{p{2.2cm}}{Avg.\ CPU time per run (s)} & \multicolumn{1}{p{2.2cm}}{Avg.\ CPU time per iteration (s)} \\
		\hline
		Passive    &  0.512 & $2.483 \times 10^{-5}$ \\
		IS         & 69.854 & $3.149 \times 10^{-3}$ \\
		\alg 30   &  3.612 & $1.228 \times 10^{-4}$ \\
		\alg 60   &  3.281 & $1.123 \times 10^{-4}$ \\
		\alg 120  &  2.978 & $1.093 \times 10^{-4}$ \\
		Stratified &  1.967 & $9.502 \times 10^{-5}$ \\
		\hline
	\end{tabular}
	\label{tbl:timings}
\end{table}

\section{Related work}
\label{sec:related}
\myparagraph{Efficient evaluation.}
Previous work has considered efficient evaluation for general classifiers, through approaches such as importance sampling~\cite{sawade_active_2010},
stratified sampling~\cite{bennett_online_2010,druck_toward_2011} and semi-supervised inference of Bayesian generative models~\cite{welinder_lazy_2013}. However, none of this work accounts for the specific features of ER evaluation, namely extreme class imbalance, and the availability of auxiliary information in the form of similarity scores. 

Bennett \& Carvalho~\cite{bennett_online_2010} outline an adaptive method for estimating precision that stratifies
on classifier scores, sampling points with probability proportional
to the stratum population and a dynamic estimate of the variance in the labels. However, their method does
not incorporate recall and is not proven to be optimal. 
Druck \& McCallum~\cite{druck_toward_2011} extend the work of \cite{bennett_online_2010} to facilitate estimation of vector-valued and non-linear functions
(including token-based accuracy and F-measure). Both of these approaches are adaptive and biased, although they rely purely on stratified sampling, which is known to be less effective at variance minimisation than importance sampling~\cite{rubinstein_simulation_2007}. We also note an exception in \cite{druck_toward_2011}: the method proposed specifically for estimating F-measure is based on proportional stratified sampling, which is neither adaptive nor biased.

Welinder \etal \cite{welinder_lazy_2013}
propose an estimation procedure for precision-recall curves, based on a Bayesian generative model. Their method is semi-supervised and makes use of the classifier scores,
but it doesn't incorporate biased sampling or adaptivity, making it unsuited to problems with class imbalance. It also imposes a restrictive assumption on the joint
distribution of scores and labels, requiring
the user to guess an appropriate parametric distribution.  Another non-adaptive approach
is the IS method of Sawade \etal~\cite{sawade_active_2010}. It facilitates the estimation of F-measures, relying on the asymptotically optimal distribution of equation~\eqref{eqn:optimal-inst-dist}. The authors address the instrumental distribution's dependence
on unknown quantities by estimating them using classifier scores. However if the scores are inaccurate or merely uncalibrated, the method
will be sub-optimal as it does not actively adapt using incoming labels.

\myparagraph{Adaptive importance sampling (AIS).}
A broad literature covers AIS, however to our knowledge, no prior work specialises these techniques to evaluation. A significant drawback of previous AIS algorithms, is that they discard and resample at each iteration, which is prohibitively wasteful when performing efficient evaluation. One of the earliest AIS algorithms is Population Monte Carlo (PMC), which maintains an entire \emph{population} of instrumental
distributions, updating them using propagation and resampling
steps~\cite{cappe_population_2004}. Standard formulations of PMC
use only the previous sample when updating
distributions, reducing statistical efficiency. Previous proofs of consistency also assume that the population
grows to an infinite size~\cite{douc_convergence_2007,cappe_adaptive_2008}. A more recent AIS algorithm is Adaptive Multiple Importance Sampling (AMIS) which is ``aimed at an optimal recycling of
past simulations in an iterated importance sampling (IS)
scheme''~\cite{cornuet_adaptive_2012}. Unlike PMC, AMIS makes use of the entire history of samples and instrumental distributions, to update the importance weights and instrumental distribution. However, it is not applicable in the efficient evaluation context because it requires an increasing sample to be drawn at each iteration, which would consume realistic label budgets too quickly.



\section{Conclusions}
We have proposed a novel adaptive importance sampler \alg for estimating 
the F-measure of ER pipelines. We leverage ER similarity scores through a
stratified Bayesian generative model, to update an instrumental sampling distribution
that optimises asymptotic variance. Statistical consistency establishes correctness of \alg,
while extensive experimentation demonstrates significant reduction to label budget relative to
existing approaches.


%
%
%

\balance

\section{Acknowledgements}
N.\ Marchant acknowledges the support of an Australian Government Research Training Program Scholarship. B.\ Rubinstein acknowledges the support of the Australian Research Council (DP150103710).

\bibliographystyle{abbrv}

\begingroup
\raggedright
\small
\bibliography{sampling}
\endgroup

\ifdefined\ARXIV
\appendix

\section*{Proof of Lemma~\ref{lem:LLN}}
The proof relies on the following theorem due to Petrov~\cite{petrov_strong_2014}.
\renewcommand\thetheorem{\unskip}
\begin{theorem}[Petrov strong LLN]
	Let $\{X_t\}_{t = 1}^{\infty}$ be a sequence of random variables with finite absolute moments of order $p > 1$ and zero means. Let $S_T = \sum_{t = 1}^{T} X_t$ for $T \geq 1$ and $S_0 = 0$. Assume the following condition is satisfied:
	\begin{equation*}
	\operatorname{E}|S_T - S_V|^p \leq A(T - V)^{p r - 1}
	\end{equation*}
	for all $T$ and $V$ such that $T > V \geq 0$, where $r \geq 1$ and $A$ is a constant. Then	$S_T/T^r \to 0 \ \text{a.s.}$
\end{theorem}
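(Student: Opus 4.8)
The plan is to deduce Lemma~\ref{lem:LLN} from the Petrov strong LLN just stated, by centering the sequence so that it becomes a mean-zero martingale difference sequence and then verifying Petrov's moment condition with the particular exponents $p = 2$ and $r = 1$. First I would set $X_t = U_t - \theta$. Conditions~\ref{lem:LLN-c1} and~\ref{lem:LLN-c2} immediately give $\operatorname{E}[X_1] = 0$ and $\operatorname{E}[X_t \mid \vec{U}_{1:t-1}] = 0$ for $t > 1$; since the $\sigma$-algebras $\sigma(\vec{X}_{1:t-1})$ and $\sigma(\vec{U}_{1:t-1})$ coincide (the two sequences differ only by the constant shift $\theta$), this is precisely the statement that $\{X_t\}$ is a martingale difference sequence with respect to its own natural filtration. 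Because $S_T = \sum_{t=1}^T X_t$ satisfies $S_T/T = \frac{1}{T}\sum_{t=1}^T U_t - \theta$, proving $S_T/T \to 0$ almost surely is equivalent to the desired conclusion.

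Next I would apply Petrov's theorem with $p = 2$ and $r = 1$, for which the required bound specialises to $\operatorname{E}|S_T - S_V|^2 \leq A (T - V)^{pr - 1} = A (T - V)$. To verify it, write the increment as $S_T - S_V = \sum_{t = V + 1}^{T} X_t$ for $T > V \geq 0$ and expand the square. The martingale difference property eliminates all cross terms: for indices $s < t$ the variable $X_s$ is $\sigma(\vec{X}_{1:t-1})$-measurable, so $\operatorname{E}[X_s X_t] = \operatorname{E}[X_s \operatorname{E}[X_t \mid \vec{X}_{1:t-1}]] = 0$. This leaves $\operatorname{E}|S_T - S_V|^2 = \sum_{t = V + 1}^{T} \operatorname{E}[X_t^2]$, reducing the task to a uniform bound on the individual second moments.

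That bound comes directly from condition~\ref{lem:LLN-c3}: using $\operatorname{E}[U_t] = \theta$ (which follows from~\ref{lem:LLN-c1}--\ref{lem:LLN-c2} by the tower rule), I get $\operatorname{E}[X_t^2] = \operatorname{E}[U_t^2] - \theta^2 \leq \operatorname{E}[U_t^2] \leq C$, hence $\operatorname{E}|S_T - S_V|^2 \leq C(T - V)$ and Petrov's condition holds with $A = C$. The same condition~\ref{lem:LLN-c3} supplies the finiteness of absolute moments of order $p = 2 > 1$ required by the theorem. Petrov's conclusion $S_T / T^r = S_T / T \to 0$ almost surely then translates back to $\frac{1}{T}\sum_{t=1}^T U_t \to \theta$ almost surely, completing the proof.

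The main obstacle is selecting the right exponents and matching them to the structure of the problem: one must recognise that the orthogonality of martingale differences yields variance growing linearly in the window length $T - V$, which is exactly the rate $(T-V)^{pr-1}$ with $pr - 1 = 1$, forcing the choice $p = 2$, $r = 1$. The subtlety worth flagging is that conditions~\ref{lem:LLN-c1}--\ref{lem:LLN-c2} are what make the centred increments genuinely orthogonal (not merely uncorrelated within a fixed block), so that the cross terms vanish for every $V$; everything after this observation is routine bookkeeping.
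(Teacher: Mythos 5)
Your proof is correct and takes essentially the same route as the paper: centre $U_t$ at $\theta$, use the conditional-mean condition to make the increments orthogonal so that $\operatorname{E}|S_T - S_V|^2 = \sum_{t = V+1}^{T} \operatorname{E}[X_t^2] \leq C(T - V)$, and invoke Petrov's theorem with $p = 2$, $r = 1$. The only cosmetic difference is that you expand $(S_T - S_V)^2$ directly via the martingale-difference property, whereas the paper first establishes uncorrelatedness and computes $\operatorname{E}[S_T^2] + \operatorname{E}[S_V^2] - 2\operatorname{E}[S_T S_V]$; the substance is identical.
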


Let $X_t = U_t - \operatorname{E}[U_t]$. Observe that for $t > s \geq 1$,
\begin{align*}
\operatorname{E}[U_t U_s] &= \operatorname{E}\left[\operatorname{E}\left[U_t U_s | U_{t - 1}, \ldots, U_1\right]\right] \tag{tower law}\\
& = \operatorname{E}\left[U_s\operatorname{E}\left[U_t | U_{t - 1},  \ldots, U_1\right]\right] \\
& = \theta \operatorname{E}\left[U_s\right]= \theta^2,
\end{align*}
which implies that the sequence $\{U_t\}_{t = 1}^{\infty}$ is uncorrelated. It follows that for $t \neq s$,
\begin{equation*}
\operatorname{E}[X_t X_s] = \operatorname{E}[U_t U_s] - \operatorname{E}[U_t] \operatorname{E}[U_s] = 0.
\end{equation*}
Denoting $S_T = \sum_{t = 1}^{T} X_t$, this means that for $T \geq V \geq 1$,
\begin{align*}
\operatorname{E}[S_T S_V] &= \operatorname{E} \! \left[\sum_{t  = 1}^{T} X_t \cdot \sum_{v  = 1}^{V} X_v\right] = \sum_{t  = 1}^{T} \sum_{v  = 1}^{V} \operatorname{E}[X_t X_v] = \sum_{v = 1}^{V} \operatorname{E}[X_v^2]
\end{align*}
(the cross-terms vanish). As a result, we have that
\begin{align*}
\operatorname{E}[(S_T - S_V)^2] & = \operatorname{E}[S_T^2] + \operatorname{E}[S_V^2] - 2 \operatorname{E}[S_T S_V] \\
& = \sum_{t = 1}^{T} \operatorname{E}[X_t^2] - \sum_{t = 1}^{V} \operatorname{E}[X_t^2] \\
& = \sum_{t = V + 1}^{T} \operatorname{E}[X_t^2] \\
& = \sum_{t = V + 1}^{T} (\operatorname{E}[U_t^2] - \theta^2)  \leq C (T - V).
\end{align*}
This implies that the conditions of Petrov's strong LLN are satisfied for $p = 2$ and $r = 1$. Hence we conclude that $S_T/T \to 0 \ \text{a.s.}$, or equivalently, $\frac{1}{T} \sum_{t = 1}^{T} U_t \to \theta \ \text{a.s.}$ \qed

\section*{Proof of Theorem~\ref{thm:consistency-alg}}
The proof involves verifying that the conditions of Theorem~\ref{thm:consistency-F-alg} hold for the particular instrumental distribution used in Algorithm~\ref{alg:AIS}. This instrumental distribution is defined in equation~\eqref{eqn:alg-epsilon-greedy-inst-dist} over the strata, but can be equivalently expressed over the pool as follows:
\begin{equation}
	q_t(z) = \left[\omega_k v_k^{(t)}\right]_{k = \kappa(z)}
\end{equation}
where $\kappa: \mathcal{Z} \to \{1, \ldots, K\}$ maps a record pair $z$ to the stratum (index) that contains it. Furthermore, in the context of Algorithm~\ref{alg:AIS} we note that $p(z) = 1/N$ (uniform over the pool).

Having made these observations, we proceed to verify the conditions of Theorem~\ref{thm:consistency-F-alg}, which amounts to verifying the conditions of Theorem~\ref{thm:consistency-linear-AIS} for $f_\mathrm{num}$ and $f_\mathrm{den}$ (defined in equation~\ref{eqn:definition-fnum-fden}).

Recall that the first condition of Theorem~\ref{thm:consistency-linear-AIS} is that $q_t(z) > 0$ whenever $f(x)p(x) \neq 0$. This is clearly satisfied for both $f_\mathrm{num}$ and $f_\mathrm{den}$, since
\begin{equation*}
	q_t(z) = \varepsilon \cdot p(z) + (1-\varepsilon) \cdot q_t^\star(z) \geq \varepsilon p(z) > 0.
\end{equation*}
The second condition of Theorem~\ref{thm:consistency-linear-AIS} requires bounded second moments. Observe that for $L\sim\oracle(Z)$ and $\hat{L}=\hat{\ell}(Z)$
\begin{align*}
f_\text{num}(X)^2 & = L \, \hat{L},\\
f_\text{den}(X)^2 & = \alpha^2 \hat{L} + 2 \alpha (1- \alpha) \hat{L} \, L + (1-\alpha)^2 L,
\end{align*}
and
\begin{equation*}
\frac{p(z)}{q_t(z)} = \frac{1}{\varepsilon + (1 - \varepsilon) [|P_k| v_k^{(t)}]_{k = \kappa(z)}} \leq \frac{1}{\varepsilon}.
\end{equation*}

These observations imply that
\begin{align*}
& \underset{\substack{X_t \sim p \\ \vec{X}_{1:t-1} \sim g}}{\operatorname{E}} \left[\frac{p(X_t)}{q_t(X_t|\vec{X}_{1:t-1})} f_\mathrm{num}(X_t)^2 \right] \\
& = \underset{\vec{X}_{1:t-1} \sim g}{\operatorname{E}}\!\left[\int \frac{p(z)}{q_t(z | \vec{X}_{1:t-1})} \, \hat{\ell}(z) \, p(z) \, p(1 | z) \, dz \right] \\
& \leq \int \frac{1}{\varepsilon} \hat{\ell}(z) \, p(z) \, p(1 | z) \, dz \leq \frac{1}{\varepsilon} < \infty
\end{align*}
and
\begin{align*}
& \underset{\substack{X_t \sim p \\ \vec{X}_{1:t-1} \sim g}}{\operatorname{E}} \left[\frac{p(X_t)}{q_t(X_t|\vec{X}_{1:t-1})} f_\mathrm{den}(X_t)^2 \right] \\
& = \underset{\vec{X}_{1:t-1} \sim g}{\operatorname{E}}\! \Bigg[\int \frac{p(z)}{q_t(z | \vec{X}_{1:t-1})} \hat{\ell}(z) \left(\alpha^2 + 2 \alpha (1- \alpha)  p(1 | z)\right) p(z) \, dz \\
& \mspace{100mu} {} + (1 - \alpha)^2 \int \frac{p(z)}{q_t(z | \vec{X}_{1:t-1})} p(z) p(1|z) \, dz \Bigg] \\
& \leq \int \frac{1}{\varepsilon} \hat{\ell}(z) \left(\alpha^2 + 2 \alpha (1- \alpha) p(1|z)\right) p(z) \, dz \\
& \mspace{100mu} {} + (1 - \alpha)^2 \int \frac{1}{\varepsilon} p(z) p(1|z) \, dz\\
& \leq \frac{\alpha^2 + 2 \alpha(1-\alpha) + (1-\alpha)^2}{\varepsilon} = \frac{1}{\varepsilon} < \infty.
\end{align*}
This confirms that the conditions of Theorem~\ref{thm:consistency-F-alg} hold, thus completing the proof. \qed

\fi

\end{document}